%% file: main.tex
\documentclass{article}

\PassOptionsToPackage{numbers, sort&compress}{natbib}
\usepackage[preprint, nonatbib]{neurips_2026}

\usepackage{microtype}
\usepackage{graphicx}
\usepackage{subcaption}
\usepackage{wrapfig}
\usepackage{booktabs} 
\usepackage{multirow} 
\usepackage[table]{xcolor} 
\usepackage[utf8]{inputenc} 
\usepackage[T1]{fontenc}    
\usepackage[numbers]{natbib} 
\usepackage{hyperref}       
\usepackage{url}            
\usepackage{amsfonts}       
\usepackage{nicefrac}       

\definecolor{mydarkblue}{rgb}{0,0.08,0.45}
\hypersetup{
  colorlinks=true,
  linkcolor=mydarkblue,
  citecolor=mydarkblue,
  urlcolor=mydarkblue,
  filecolor=mydarkblue,
}

\definecolor{cvprblue}{RGB}{31,119,180}  
\definecolor{darkgreen}{RGB}{34,139,34}  

\usepackage{amsmath}
\usepackage{amssymb}
\usepackage{mathtools}
\usepackage{amsthm}
\usepackage{etoolbox}

\AtBeginEnvironment{equation}{\small}
\AtBeginEnvironment{equation*}{\small}
\AtBeginEnvironment{align}{\small}
\AtBeginEnvironment{align*}{\small}
\usepackage{pifont}
\usepackage{bm}
\usepackage{cancel}

\newcommand{\yrcite}[1]{\cite{#1}}

\usepackage[capitalize,noabbrev]{cleveref}

\theoremstyle{plain}

\newtheorem*{lemma*}{Lemma}
\newtheorem{corollary}{Corollary}
\newtheorem*{corollary*}{Corollary}
\theoremstyle{definition}
\newtheorem{definition}{Definition}

\newtheorem{property}{Property}
\newtheorem*{property*}{Property}

\newtheoremstyle{heuristicstyle}
  {0.5em}
  {0.5em}
  {}
  {}
  {\bfseries}
  {.}
  {0.5em}
  {\thmname{#1}\thmnumber{ #2}\thmnote{ (\textit{#3})}}
\theoremstyle{heuristicstyle}
\newtheorem{remark}{Remark}
\newtheorem*{remark*}{Remark}

\newtheorem*{observation*}{Observation}

\newtheorem*{insight*}{Insight}

\usepackage[disable,textsize=tiny]{todonotes}

\newcommand{\papertitle}{When Policy Entropy Constraint Fails: Preserving Diversity in Flow-based RLHF via Perceptual Entropy}

\title{\papertitle}

\author{%
  \bfseries
  Xiaofeng Tan$^{1,2,*}$\quad
  Jun Liu$^{2}$\quad
  Bin-Bin Gao$^{2}$\quad
  Yuanting Fan$^{2}$\quad
  Xi Jiang$^{3}$\\
  \bfseries
  Chengjie Wang$^{2}$\quad
  Hongsong Wang$^{1,\dagger}$\quad
  Feng Zheng$^{3,\dagger}$ \\[0.4em]
  \normalfont
  $^{1}$Southeast University\quad
  $^{2}$Tencent Youtu Lab\quad
  $^{3}$Southern University of Science and Technology \\[0.2em]
  \normalfont \texttt{xiaofengtan@seu.edu.cn} \\[0.3em]
  \normalfont \footnotesize $^{*}$Work done during Xiaofeng Tan's internship at Tencent Youtu Lab.\quad $^{\dagger}$Corresponding authors.
}

\begin{document}

\maketitle

\begin{abstract}
  \input{sections/0_abstract}
\end{abstract}

\input{sections/1_introduction}
\input{sections/2_preliminary}

\input{sections/3_motivation}

\input{sections/4_method}

\input{sections/5_experiments}

\input{sections/7_related_works}

\input{sections/6_conclusion}

\bibliographystyle{plainnat}
\bibliography{main}

\onecolumn
\appendix

\renewcommand{\theequation}{S\arabic{equation}}
\renewcommand{\thefigure}{S\arabic{figure}}
\renewcommand{\thetable}{S\arabic{table}}
\renewcommand{\thesection}{\Alph{section}}

\renewcommand{\theHequation}{supp.S\arabic{equation}}
\renewcommand{\theHfigure}{supp.S\arabic{figure}}
\renewcommand{\theHtable}{supp.S\arabic{table}}
\renewcommand{\theHsection}{supp.\Alph{section}}

\setcounter{equation}{0}
\setcounter{figure}{0}
\setcounter{table}{0}

\begin{center}
  {\Large\bfseries \papertitle}\\[0.6em]
  {\large\bfseries Supplementary Material}
\end{center}
\vspace{1em}

\input{sections/8_appendix}


\end{document}

%% file: sections/0_abstract.tex
RLHF is widely used to align flow-matching text-to-image models with human preferences, but often leads to severe diversity collapse after fine-tuning. In RL, diversity is often assumed to correlate with policy entropy, motivating entropy regularization. However, we show this intuition breaks in flow models: policy entropy remains constant, even while perceptual diversity collapses. We explain this mismatch both theoretically and empirically: the constant entropy arises from the fixed, pre-defined noise schedule, while the diversity collapse is driven by the mode-seeking nature of policy gradients. As a result, policy entropy fails to prevent the model from converging to a narrow high-reward region in the perceptual space. To this end, we introduce perceptual entropy that captures diversity in a perceptual space and maintains the property of standard entropy. Building upon this insight, we propose two entropy-regularized strategies, Perceptual Entropy Constraint and Perceptual Constraints on Generation Space, to preserve perceptual diversity and improve the quality. Experiments across two base models, neural and rule-based rewards, and three perceptual spaces demonstrate consistent gains in the quality-diversity trade-off; PEC achieves the best overall score of 0.734 (vs.\ baseline's 0.366); a complementary setting of PEC further reaches a diversity average of 0.989 (vs.\ baseline's 0.047). Our \href{https://xiaofeng-tan.github.io/projects/PEC}{project page} is publicly available.

%% file: sections/1_introduction.tex
\section{Introduction}
\label{sec:introduction}

Flow matching~\citep{lipman2022flow} has emerged as a powerful paradigm for text-to-image generation~\citep{wu2025qwenimagetechnicalreport}. Since human-centric criteria such as aesthetics~\citep{discus0434_aesthetic_2024}, semantic consistency~\citep{radford2021learning}, and safety~\citep{liu2024safetydpo} are difficult to optimize with likelihood-based pre-training alone, recent work aligns pretrained generative models through Reinforcement Learning from Human Feedback (RLHF)~\citep{fan2023dpok,wallace2024diffusion,liu2025flow}.

RLHF addresses this by post-training generative models with feedback from human annotators or reward models. However, diversity can decrease during fine-tuning, which limits both \textit{\textbf{(1)}} RLHF effectiveness and \textit{\textbf{(2)}} real-world applicability. Specifically, \textit{\textbf{(1)}} RLHF requires diverse and informative samples for reward evaluation; reduced diversity narrows exploration and weakens feedback scope. \textit{\textbf{(2)}} Many applications require coverage across contexts \cite{albuquerque2025benchmarking, dombrowski2025image}. As shown in Fig.~\ref{fig:teaser}, Flow-GRPO produces visually similar samples for both portrait and living-room prompts, while an ideal model should preserve variations in identity, pose, background, layout, lighting, and furnishing style. This motivates us to seek RLHF frameworks that {\textbf{improve quality without sacrificing diversity}}.

\begin{figure}[t]
    \centering
    \includegraphics[width=1.0\textwidth]{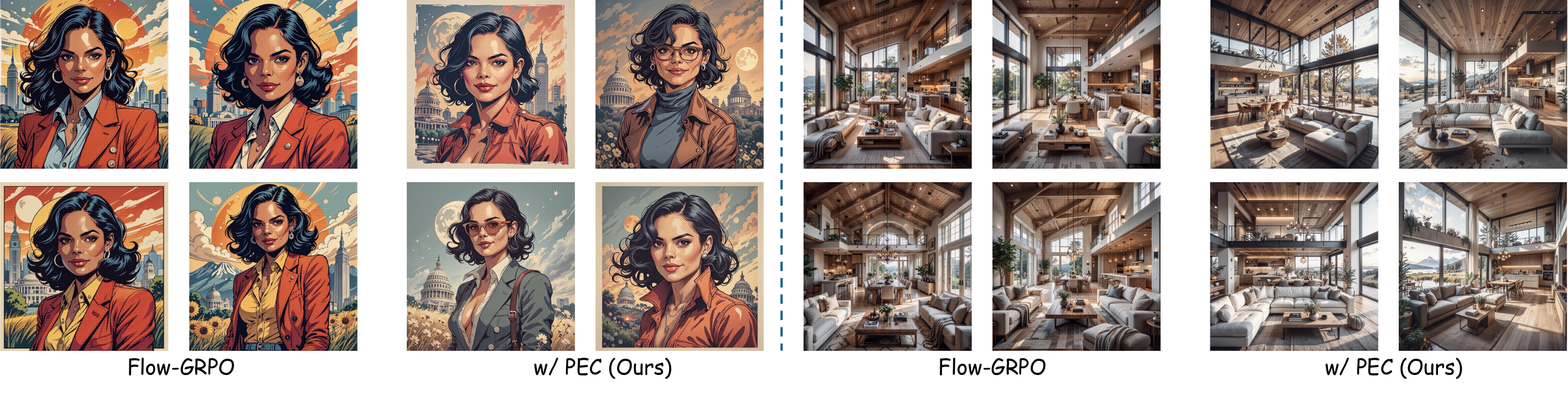}
    \vspace{-0.4em}
    \caption{\textbf{Diversity Collapse.} Flow-GRPO improves quality but sacrifices diversity, producing similar styles.} 
       \label{fig:teaser}
    \vspace{-0.4em}
\end{figure}

In RL research on LLMs, diversity is commonly characterized by policy entropy \citep{zhu2506surprising, jiang2025rethinking, shen2025exploring, cheng2025reasoning,adamczyk2025average,liu2025prorl}.  Under this view, recent works \cite{yue2025does, cui2025entropy} aim to balance diversity and reward $\mathcal{R}$ using policy entropy $\exp(\mathcal{H})$, and reports an empirical relationship, $\mathcal{R} = -a \exp(\mathcal{H}) + b,$ with coefficients $a$ and $b$.  A natural solution to diversity collapse in flow-based RLHF is therefore policy entropy regularization.

However, revisiting this intuition under flow matching reveals a key inconsistency: \textbf{diversity collapses while the policy entropy surrogate remains constant}, as shown in Fig.~\ref{fig:motivation}. Diffusion and flow-based models~\citep{DDPM_NIPS2020, song2021scorebasedsde} avoid directly modeling $p(\mathbf{x}_0)$ by decomposing the process into transitions $p(\mathbf{x}_{t-1} \mid \mathbf{x}_{t})$, which we term policy entropy. However, this quantity is fixed by the noise schedule and thus fails to reflect changes in diversity. This raises a fundamental question: \textbf{\textit{what drives diversity collapse in RL for flow models despite constant policy entropy?}}

\textbf{Contributions.} In this paper, we investigate the diversity collapse from the perspective of entropy mechanism in flow-based RLHF, and propose a novel entropy concept with two entropy-regularization strategies to improve quality without sacrificing diversity. Main contributions are highlighted below.

First, \textbf{we investigate the apparent paradox} between constant entropy and diversity collapse in flow-based RLHF. We study two questions: {\emph{\textbf{(1)} why does the policy entropy remain constant after RL fine-tuning?}} and \emph{\textbf{(2)} why can diversity still collapse under this constant policy entropy?}  {\textit{\textbf{(1)}}} For the first question, we show that the noised samples $\mathbf{x}_t$ are generated by injecting noise with a \emph{pre-defined and fixed intensity} into the model-predicted mean $\mu_\theta(\mathbf{x}_{t})$, resulting in a Gaussian distribution with fixed covariance. Hence, both likelihoods and policy entropy are determined by the fixed noise scheduler and are independent of the flow model parameters $\theta$. {\textit{\textbf{(2)}}} For the second question, we find that this paradox stems from the mode-seeking nature of online RL, where the policy is optimized to concentrate probability distribution on high-reward region in the reward space. In this case, the multi-peak pretrained policy distribution may collapse into a single-peak perceptual distribution. This motivates an RLHF framework that encourages broad coverage of multiple high-reward regions in perceptual space, rather than convergence to a single peak, as shown in Fig.~\ref{fig:mode_mean_seeking}.

Second,  motivated by the observation that policy entropy in the generative space fails to capture diversity collapse, \textbf{we introduce a novel entropy concept}, {\textit{perceptual entropy}} $\mathcal{H}_{\text{perc}}$, which faithfully reflects this phenomenon while remaining consistent with prior entropy-based methods. Building on this concept, we further reveal an empirical entropy-based reward for flow models,
$\mathcal{R} = -a \exp(\mathcal{H}_{\text{perc}}) + b,$
which is analogous to relationships observed in LLMs~\citep{cui2025entropy}. Furthermore, building on the above analysis, \textbf{we propose two entropy-regularized strategies} to mitigate diversity collapse: \textit{Perceptual Entropy Constraint} (PEC), which incorporates perceptual entropy into the reward signal, and \textit{Perceptual Constraints on Generation Space}, which aim to align the collapsed perceptual space with the consistent VAE space to maintain diversity.

Finally, comprehensive experiments on FLUX.dev~\cite{flux2024} and SD3.5-M~\cite{sd3} using neural and rule-based rewards, alongside PickScore~\cite{kirstain2023pick}, DINO~\cite{DBLP:journals/corr/abs-2104-14294}, and CLIP~\cite{radford2021learning} perceptual encoders, demonstrate that perceptual entropy improves the quality-diversity trade-off: PEC achieves the best overall score of {0.734}, and a complementary setting of PEC further reaches a diversity average of {0.989}.

%% file: sections/2_preliminary.tex
\section{Preliminary}
\label{sec:preliminary}

\newcommand{\bx}{\mathbf{x}}
\newcommand{\bz}{\mathbf{z}}

\textbf{RLHF Objective and Policy Gradient.} Given an external prompt condition $c\sim\mathcal{D}$ and a policy $p_{\theta}$ that generates $\bx\sim p_{\theta}(\cdot\mid c)$, RLHF aims to maximize the expected reward:
\begin{equation}\label{eq:rlhf}
    \max_{\theta}\; \mathcal{J}(\theta)
    := \mathbb{E}_{c\sim \mathcal{D},\, \bx\sim p_{\theta}(\cdot\mid c)}\bigl[\mathcal{R}(\bx, c)\bigr],
\end{equation}

Considering that the gradient of the objective is difficult to estimate directly, policy gradient methods such as REINFORCE~\citep{williams1992simple} decompose it into a sum of log-likelihood gradients:
\begin{equation}
\label{eq:pg}
    \nabla_{\theta} \mathcal{J}(\theta) = \mathbb{E}\Bigl[\sum_{t=1}^{T} \nabla_{\theta}\log p_{\theta}(\bx_{t-1}\mid \bx_t,c)\, \mathcal{A}(\bx,c)\Bigr],
\end{equation}
where $\mathcal{A}(\bx,c)$ denotes an advantage estimated by the final sample $\bx\equiv \bx_0$.

\textbf{Group-Relative Advantage.} GRPO~\citep{shao2024deepseekmathpushinglimitsmathematical} computes advantages via group-wise normalization. For each prompt, it samples $K$ candidates $\{\bx^{k}\}_{k=1}^{K}$ and normalizes rewards within the group:
\begin{equation}
\label{eq:grpo_adv}
\mathcal{A}(\bx^{i},c)
= \frac{\mathcal{R}(\bx^{i},c) - \mathrm{mean}\{\mathcal{R}(\bx^{k},c)\}}
{\mathrm{std}\{\mathcal{R}(\bx^{k},c)\}}.
\end{equation}
\textbf{Policy Density in Flow Models.} For flow models, the policy in Eq.~\eqref{eq:pg} is defined by the reverse-time SDE transition~\citep{fan2023dpok,liu2025flow}. Under the flow parameterization, each step follows a Gaussian transition
\begin{equation}
\label{eq:x_tdt_rewrite}
\bx_{t-1} = \mu_\theta(\bx_t, t, c) + \sigma_t \sqrt{\mathrm{d}t}\, \bm{\epsilon}, \bm{\epsilon}\sim\mathcal{N}(\mathbf{0},\mathbf{I}) \quad  \Rightarrow \quad  p_\theta(\bx_{t-1}\mid \bx_t,c)=\mathcal{N}\bigl(\bx_{t-1};\mu_\theta(\bx_t,t,c),\sigma_t^2\mathrm{d}t\,\mathbf{I}\bigr),
\end{equation}
where covariance $\sigma_t^2\mathrm{d}t\,\mathbf{I}$ is fixed by noise schedule, while mean $\mu_\theta$ is given by the velocity field\citep{liu2025flow}:
\begin{equation}
\label{eq:mu_theta_rewrite}
\mu_\theta(\bx_t, t, c) = \bx_t + \Bigl[ v_\theta(\bx_t, t, c) + \frac{\sigma_t^2}{2t}\bigl(\bx_t + (1 - t)v_\theta(\bx_t, t, c)\bigr)\Bigr]\mathrm{d}t.
\end{equation}

\textbf{Flow-Based GRPO Objective.} Flow-GRPO optimizes the policy gradient in Eq.~\eqref{eq:pg} with the group-relative advantage~in Eq.~\eqref{eq:grpo_adv}, using a clipped proximal surrogate:
\begin{equation}
    \label{eq:grpo_objective}
    \mathcal{J}_{\text{GRPO}}(\theta)
    = \mathbb{E}_{i,t}\big[\min\big(\rho_{t,i}(\theta)\mathcal{A}_{i},\;
    \operatorname{clip}(\rho_{t,i}(\theta), 1-\epsilon, 1+\epsilon)\mathcal{A}_{i}\big)\big].
    \end{equation}
where $\rho_{t,i}(\theta)=\frac{p_{\theta}(\bx_{t-1}^{i}\mid \bx_t^{i},c)}{p_{\theta_{\text{old}}}(\bx_{t-1}^{i}\mid \bx_t^{i},c)}$ is the probability ratio between the current and old policy.

\textbf{Entropy Mechanism in LLM RL.} Token-level policy entropy from $p_\theta(\bx_t\mid \bx_{<t})$ is widely used for entropy regularization to encourage exploration~\citep{ziebart2008maximum,haarnoja2017reinforcement,haarnoja2018soft,zhu2506surprising,jiang2025rethinking,shen2025exploring}. In this context, \citet{cui2025entropy} establish an empirical relationship between policy entropy and validation reward:
\begin{equation}
    \label{eq:fit}
    \mathcal{R}=-a\exp(\mathcal{H})+b,
    \end{equation}
    where $\mathcal{R}$ denotes the validation reward and $a,b>0$ are coefficients.

%% file: sections/3_motivation.tex
\section{Understanding the Failure of Policy Entropy Constraints}
\label{sec:motivation}

Entropy regularization is a standard mechanism for mitigating diversity collapse in RL. In flow-based RL, however, a direct entropy constraint on the generated-sample distribution is difficult to instantiate, motivating a closer examination of tractable entropy surrogates. We identify a paradox: \textit{the policy entropy $\mathcal{H}(p_\theta(\bx_{t-1}\mid \bx_t,c))$ remains invariant during training, while diversity collapse still occurs}. This section analyzes this phenomenon through two questions: (1) why policy entropy remains constant during training, unlike in LLMs; and (2) why collapse still occurs despite this invariance.

\subsection{From Marginal Entropy to Policy Entropy}
\label{subsec:intractable-entropy}

\textbf{Challenge in Estimating Marginal Entropy.} In reinforcement learning, entropy regularization is typically applied to the marginal entropy $\mathcal{H}(p_\theta(\mathbf{x}_0 \mid c))$. In contrast, flow and diffusion models are trained through conditional transition distributions $p_\theta(\mathbf{x}_{t-1} \mid \mathbf{x}_t, c)$. As a result, evaluating $\mathcal{H}(p_\theta(\mathbf{x}_0 \mid c))$ requires access to the exact log-density of the marginal $p_\theta(\mathbf{x}_0 \mid c)$, which is widely recognized as an intractable issue in diffusion and flow model~\citep{song2021scorebasedsde,grathwohl2019ffjord,ren2024diffusion,desanti2025provable,desanti2025fdc,celik2025diffmaxent,maduabuchi2026ecfm}.

\textbf{Joint Entropy \& Policy Entropy.} A natural alternative is to consider the entropy of the full trajectory $\bx_{0:T}$ rather than the final sample $\bx_{0}$. In RL for LLMs~\citep{zhu2506surprising, jiang2025rethinking, shen2025exploring, cheng2025reasoning,adamczyk2025average,liu2025prorl}, the token-level policy entropy of $p_\theta(\cdot\mid \bx_{<t})$ is widely used for regularization. Similarly, we consider using the policy entropy of $p_\theta(\bx_{t-1}\mid\bx_t,c)$ in flow models to obtain the joint entropy, as follows:
\begin{equation}
p_\theta(\bx_{0:T}|c)=p(\bx_T|c)\prod_{t=1}^{T}p_\theta(\bx_{t-1}|\bx_t,c)=p(\bx_T)\prod_{t=1}^{T}p_\theta(\bx_{t-1}|\bx_t,c),\quad(\bx_T\!\perp\!c).
\end{equation}
Considering that the initial distribution $\bx_{T}$ follows standard Gaussian distribution $\mathcal{N}(\mathbf{0}, \mathbb{I})$, the joint entropy $\mathcal{H}(p_\theta(\bx_{0:T}\mid c))$ is linearly related to the policy entropy $\mathcal{H}(p_\theta(\bx_{t-1}\mid\bx_t,c))$, as follow:
\begin{equation*}
\mathcal{H}(p_\theta(\bx_{0:T}|c))
=-\mathbb{E}_{p_\theta(\bx_{0:T}|c)}[\log p(\bx_T)+\sum_{t=1}^{T}\log p_\theta(\bx_{t-1}|\bx_t,c)]
=\mathcal{H}(p(\bx_T))+\sum_{t=1}^{T}\mathcal{H}(p_\theta(\bx_{t-1}|\bx_t,c)).
\end{equation*}
where $\mathcal{H}(p(\bx_T))$ is constant because $\bx_T$ is drawn from a fixed Gaussian prior. 

Thus, by analogy with token-level entropy in LLMs, the policy entropy appears to be a natural tractable surrogate for trajectory-level entropy in flow-based RL. However, we observe that it is fixed by during RL fine-tuning and therefore cannot reflect the diversity collapse observed. Below, we provide the theoretical analysis in Sec.~\ref{subsec:constant-entropy} and verify this phenomenon empirically in Fig.~\ref{fig:motivation}.

\subsection{Constant Policy Entropy in Flow-based RL}
\label{subsec:constant-entropy}
We now formalize the per-step policy entropy induced by the reverse Gaussian transition and show that it is determined solely by the prescribed noise schedule.

\begin{definition}[Policy Entropy in Flow-based RL]
\label{def:entropy-in-flow-matching-rl}
For a prompt $c\in\mathcal{D}$ and timestep $t$, the per-step conditional entropy is defined as:
\begin{equation}
\label{eq:entropy-in-flow-matching-rl}
\mathcal{H}_t(p_\theta) = -\mathbb{E}_{\{\mathbf{x}_{\tau}\}_{\tau=0}^{T}\sim p_\theta(\cdot\mid c)}\!\left[\log p_\theta(\mathbf{x}_{t-1}\mid\mathbf{x}_t,c)\right],
\end{equation}
where the policy $p_\theta(\mathbf{x}_{t-1}\mid \mathbf{x}_t, c)$ is given by the Gaussian transition induced by the flow model $\theta$, i.e.,
$\mathcal{N}(\mathbf{x}_{t-1}; \mu_\theta(\mathbf{x}_t,t,c), \sigma_t^2\,\mathrm{d}t\,\mathbf{I})$, defined as follows:
\begin{equation}\label{eq:xxxxxlog_prob}
\log p_\theta(\mathbf{x}_{t-1}\mid\mathbf{x}_t,c) = -\frac{\|\mathbf{x}_{t-1}-\mu_\theta(\mathbf{x}_t,t,c)\|^2}{2\sigma_t^2\,\mathrm{d}t} + C_t,\qquad C_t = -\tfrac{d}{2}\log\!\big(2\pi\sigma_t^2\,\mathrm{d}t\big),
\end{equation}
where $d$ denotes the dimensionality of $\mathbf{x}_t$, and $C_t$ is constant term that is independent of $\theta$.
\end{definition}

Definition~\ref{def:entropy-in-flow-matching-rl} provides the closed-form density of the policy probability and its entropy. To further analyze the quantitative property of policy entropy, we provide the following property.

\begin{property}[Constant Policy Entropy in Flow-based RL]
\label{prop:constant-entropy}
For the reverse-time transition in Eq.~\eqref{eq:x_tdt_rewrite}, the per-step conditional entropy at timestep $t$ satisfies (see in App.~\ref{app:proof1}):
\begin{equation}
\mathcal{H}_t(p_\theta) = \frac{d}{2} - C_t.
\end{equation}
\end{property}
Property~\ref{prop:constant-entropy} shows that the policy entropy is invariant to the model parameters $\theta$ and depends only on the noise schedule and dimensionality. Intuitively, variations in $\theta$ only affect the Gaussian mean $\mu_\theta(\mathbf{x}_t,t,c)$ while leaving the covariance $\sigma_t^2\,\mathrm{d}t\,\mathbf{I}$ unchanged. Thus, the policy entropy remains constant during model $\theta$ fine-tuning, since the covariance is fixed by the noise schedule.

\begin{wrapfigure}[10]{r}{0.6\textwidth}
\centering
\vspace{-1.1em}
\includegraphics[width=\linewidth]{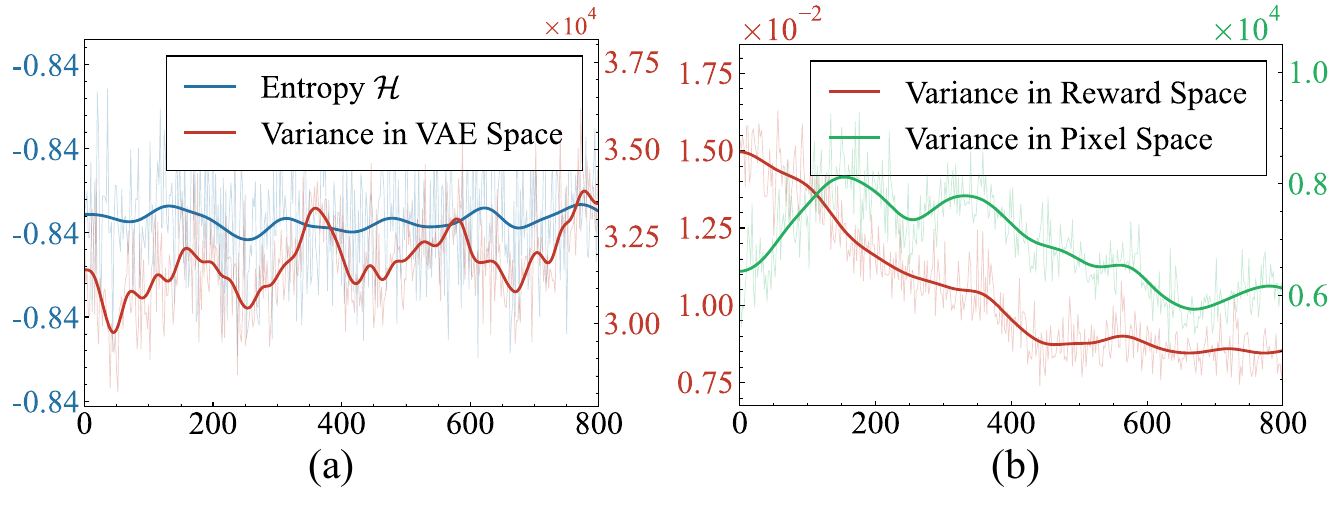}
\vspace{-1.5em}
\caption{\textbf{Training-time Metrics.} (a) Entropy vs. variance in VAE space.  (b) Variance in pixel and reward space.}
\label{fig:motivation}
\end{wrapfigure}

Beyond the theoretical analysis, we empirically verify the constant policy entropy property in Fig.~\ref{fig:motivation}. As shown in Fig.~\ref{fig:motivation}(a), during training, the entropy and variance in the VAE space remain nearly invariant. \textbf{\textit{However, diversity in the reward-perceived space still decreases over training.}} Consistent trends are also observed in the pixel and reward spaces, as shown in Fig.~\ref{fig:motivation}(b).  Next, we analyze the underlying reason for this apparent contradiction.

\subsection{Diversity Collapse behind Constant Policy Entropy}
\label{subsec:diversity-collapse}

The previous section explains why entropy stays constant. We next examine why diversity can still collapse. Starting from the policy-gradient objective in Eq.~\eqref{eq:pg}, we present the following corollary, with proof deferred to App.~\ref{app:proof-unified-grpo}. {This result characterizes the unclipped local policy-gradient direction underlying GRPO, rather than an exact equivalence to the full clipped GRPO surrogate.}

\begin{corollary}[Mode-Seeking Optimization under On-Policy Policy Gradient]
\label{cor:unified-grpo-main}
For policy gradient methods~\citep{williams1992simple} in Eq.~\eqref{eq:pg}, with $p_{\theta_{\mathrm{old}}}$ and $\mathcal{A}$ fixed within an update, the gradient satisfies:
\begin{equation*}
\nabla_\theta \mathcal{J}(\theta) = \nabla_\theta \Big( -D_{\mathrm{KL}}\big(p_\theta(\bx_{0:T}|c) \,\|\, p_{\mathcal{A}}(\bx_{0:T}|c)\big) + D_{\mathrm{KL}}\big(p_\theta(\bx_{0:T}|c) \,\|\, p_{\theta_{\mathrm{old}}}(\bx_{0:T}|c)\big) \Big),
\end{equation*}
where $p_\theta(\mathbf{x}_{0:T}|c)$ is the joint distribution, and $p_{\mathcal{A}}(\bx_{0:T}|c)$ is the target distribution induced by advantage on the same trajectory space $p_{\mathcal{A}}(\bx_{0:T}|c)=Z_{\mathcal{A}}(c)^{-1}p_{\theta_{\mathrm{old}}}(\bx_{0:T}|c)\exp(\mathcal{A}(\bx_0,c))$ with $Z_{\mathcal{A}}(c)=\int p_{\theta_{\mathrm{old}}}(\bx_{0:T}|c)\exp(\mathcal{A}(\bx_0,c))\,\mathrm{d}\bx_{0:T}$.
\end{corollary}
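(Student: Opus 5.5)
The plan is to avoid differentiating the two KL terms separately. Instead, I would show that their combination collapses, for every $\theta$, to the expected advantage under $p_\theta$ plus a $\theta$-independent constant, and then identify the gradient of that expectation with Eq.~\eqref{eq:pg}. Throughout, $p_{\theta_{\mathrm{old}}}$ and $\mathcal{A}$ are frozen. Consequently $p_{\mathcal{A}}(\bx_{0:T}|c)$ and $Z_{\mathcal{A}}(c)$ do not depend on $\theta$.

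\emph{Step 1: expand the log of the target.} From the definition of $p_{\mathcal{A}}$,
\begin{equation*}
\log p_{\mathcal{A}}(\bx_{0:T}|c)=\log p_{\theta_{\mathrm{old}}}(\bx_{0:T}|c)+\mathcal{A}(\bx_0,c)-\log Z_{\mathcal{A}}(c).
\end{equation*}
I would also note that $Z_{\mathcal{A}}(c)$ is finite, since the group-normalized advantage in Eq.~\eqref{eq:grpo_adv} takes finitely many bounded values on the sampled group. This makes $p_{\mathcal{A}}$ a well-defined density on the same trajectory space.

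\emph{Step 2: the combination of KL terms.} Writing both KL divergences as expectations under $p_\theta(\bx_{0:T}|c)$, the $\mathbb{E}_{p_\theta}[\log p_\theta]$ terms cancel. This leaves
\begin{equation*}
-D_{\mathrm{KL}}(p_\theta\|p_{\mathcal{A}})+D_{\mathrm{KL}}(p_\theta\|p_{\theta_{\mathrm{old}}})=\mathbb{E}_{p_\theta}\big[\log p_{\mathcal{A}}-\log p_{\theta_{\mathrm{old}}}\big]=\mathbb{E}_{p_\theta(\bx_{0:T}|c)}\big[\mathcal{A}(\bx_0,c)\big]-\log Z_{\mathcal{A}}(c).
\end{equation*}
The entropy of $p_\theta$ drops out entirely. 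This is convenient in view of Property~\ref{prop:constant-entropy}, and it explains why the corollary does not contradict constant policy entropy.

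\emph{Step 3: gradient of the expected advantage.} Since $\log Z_{\mathcal{A}}(c)$ is constant in $\theta$, the right-hand side of the corollary equals $\nabla_\theta\,\mathbb{E}_{p_\theta(\bx_{0:T}|c)}[\mathcal{A}(\bx_0,c)]$. I would apply the log-derivative trick to this term, which gives $\mathbb{E}_{p_\theta}[\nabla_\theta\log p_\theta(\bx_{0:T}|c)\,\mathcal{A}(\bx_0,c)]$. I would then use the Markov factorization $p_\theta(\bx_{0:T}|c)=p(\bx_T)\prod_{t}p_\theta(\bx_{t-1}|\bx_t,c)$ from Sec.~\ref{subsec:intractable-entropy}. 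Because the prior $p(\bx_T)$ is $\theta$-free, the trajectory score becomes $\sum_{t=1}^T\nabla_\theta\log p_\theta(\bx_{t-1}|\bx_t,c)$. Finally, averaging over $c\sim\mathcal{D}$ recovers exactly Eq.~\eqref{eq:pg} with $\mathcal{A}$ treated as fixed.

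The main obstacle is interpretive rather than computational. Eq.~\eqref{eq:pg} is the on-policy REINFORCE gradient, so the identity should be read as holding for the frozen advantage, with samples drawn from $p_\theta$ (equivalently evaluated at $\theta=\theta_{\mathrm{old}}$ as in the on-policy setting). I would state this explicitly.

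The technical point is justifying the interchange of $\nabla_\theta$ and the integral in Step 3. Each transition is Gaussian with a $\theta$-independent covariance, by Eq.~\eqref{eq:x_tdt_rewrite}, so the score $\nabla_\theta\log p_\theta(\bx_{t-1}|\bx_t,c)=(\bx_{t-1}-\mu_\theta)^\top\nabla_\theta\mu_\theta/(\sigma_t^2\mathrm{d}t)$ has finite moments. I would therefore assume $v_\theta$ is smooth in $\theta$ with locally bounded derivatives, and invoke dominated convergence to conclude.
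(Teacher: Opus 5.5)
Your proof is correct and uses the same ingredients as the paper's proof: the log-ratio form of $\mathcal{A}$ obtained from the definition of $p_{\mathcal{A}}$, the cancellation of $\mathcal{H}(p_\theta)$ between the two KL terms, the $\theta$-independence of $Z_{\mathcal{A}}(c)$, and the log-derivative trick with the Markov factorization. You simply run the chain in the opposite direction: the paper starts from REINFORCE and applies the log-derivative trick in reverse, whereas you first show that the KL combination equals $\mathbb{E}_{p_\theta}[\mathcal{A}(\bx_0,c)]-\log Z_{\mathcal{A}}(c)$ for every $\theta$ and then differentiate, which is slightly more transparent. One small fix: $Z_{\mathcal{A}}(c)$ integrates over the whole trajectory space, so its finiteness should come from $\mathcal{A}(\cdot,c)$ being bounded as a function of all $\bx_0$ (e.g.\ a bounded reward with frozen group statistics), not from the values $\mathcal{A}$ takes on the sampled group.
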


Corollary~\ref{cor:unified-grpo-main} implies that the unclipped update direction locally minimizes the KL divergence to the advantage-induced target $p_{\mathcal{A}}$ while maximizing the KL divergence from the old policy $p_{\theta_{\mathrm{old}}}$. {The clipped GRPO objective can be viewed as a finite-sample, trust-region-style modification of this direction.} Since both KL terms place $p_\theta$ as the first argument, they correspond to the reverse KL, which is mode-seeking~\citep{ji2024towards} and drives the policy toward high-reward regions in the perceptual space while remaining close to the old policy in the generation space.

\textbf{Conceptual Illustrations.} Intuitively, the target distribution $p_{\mathcal{A}}$ is sampled from the old policy $\theta_\text{old}$, and thus unexplored perceptual modes are excluded from optimization. As a result, the policy model is encouraged toward a single high-reward mode with limited coverage, leading to diversity collapse. Repeated on-policy updates therefore converge probability mass on a narrow high-reward region in the perceptual space, as shown in Fig.~\ref{fig:mode_mean_seeking}. In contrast, an ideal model augmented with a perceptual signal should promote coverage over multiple high-reward regions. Empirically, Fig.~\ref{fig:reward-variance-training} shows that the dispersion of $p_{\mathcal{A}}$ decreases during training, with both reward variance and sample diversity declining, consistent with the contraction predicted by Corollary~\ref{cor:unified-grpo-main}.
 
\begin{figure*}[t]
    \centering
    \begin{minipage}[t]{0.49\textwidth}
        \vspace{0pt}
        \centering
        \includegraphics[width=\linewidth]{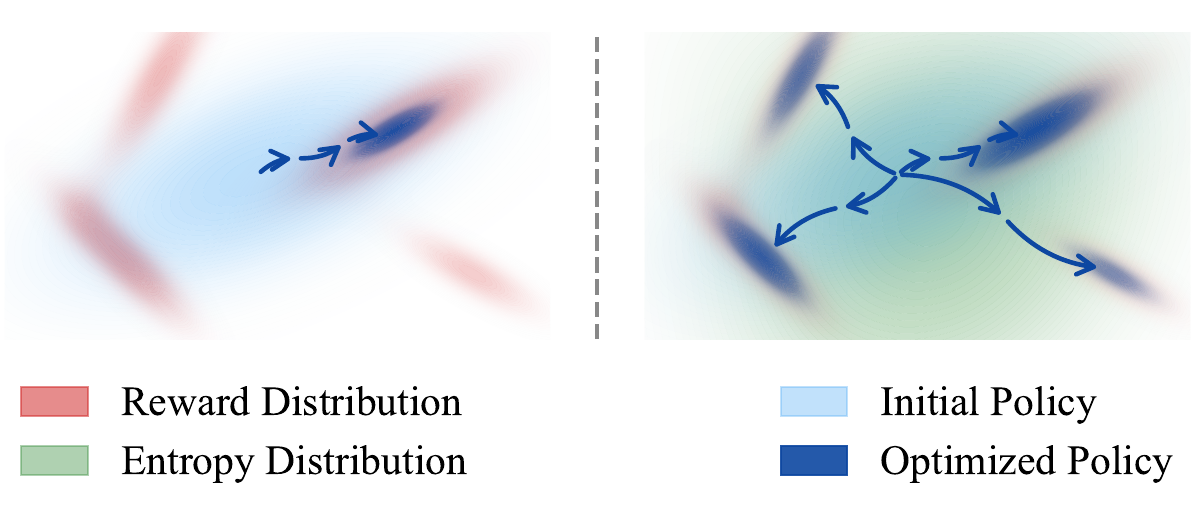}
        \caption{\textbf{Optimization Behavior.} GRPO shows mode-seeking behavior and converges to a single-peak distribution with high reward, while an ideal model should encourage coverage of multiple high-reward regions.}
        \label{fig:mode_mean_seeking}
    \end{minipage}\hfill
    \begin{minipage}[t]{0.49\textwidth}
        \vspace{0pt}
        \centering
        \includegraphics[width=\linewidth]{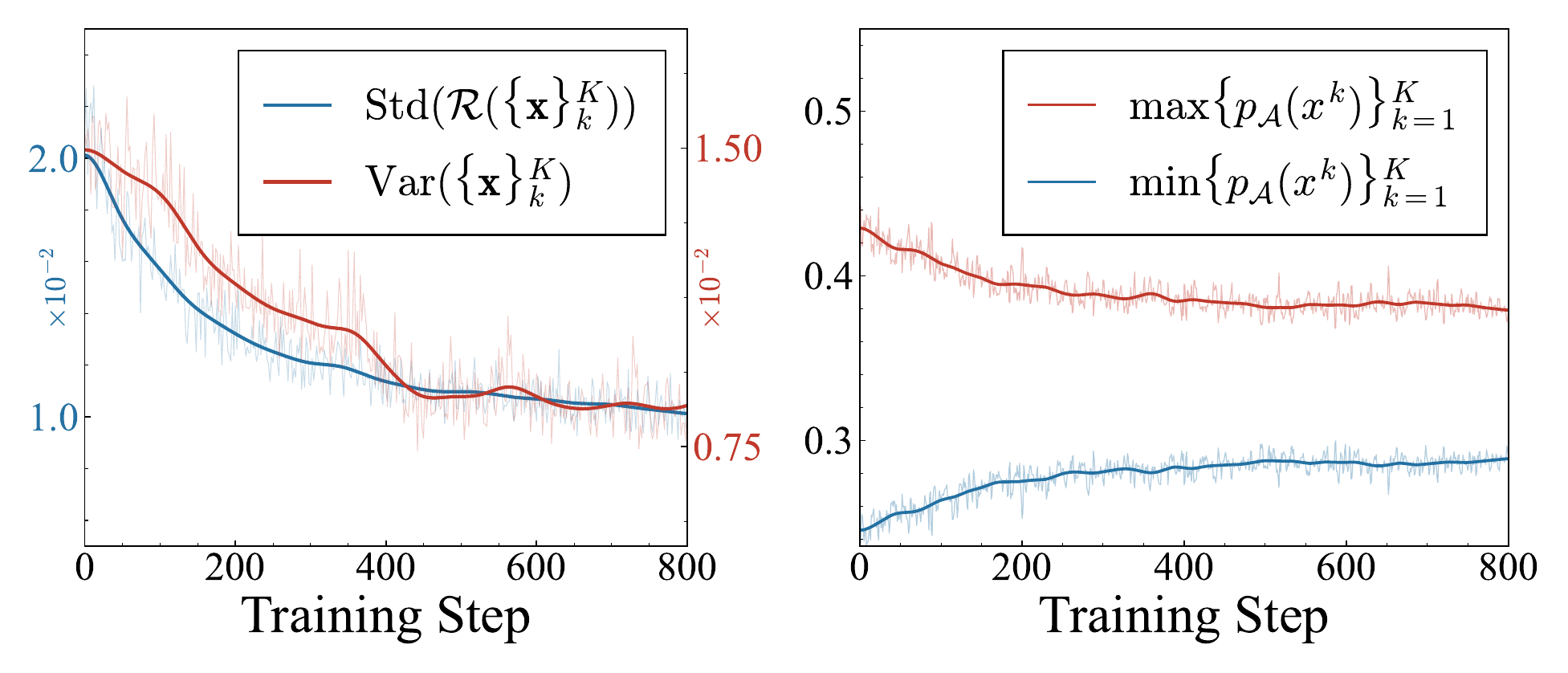}
        \caption{\textbf{Training Dynamics Metrics.} (a) variance of reward and standard deviation of samples; (b) maximum and minimum advantage probabilities $p_{\mathcal{A}}$ during training.}
        \label{fig:reward-variance-training}
    \end{minipage}
\end{figure*}

\textbf{Summary.} The policy entropy in the generation space remains invariant, as its covariance is fixed by the noise schedule; however, mode-seeking optimization drives the model to concentrate on a single high-reward mode with low perceptual diversity.

%% file: sections/4_method.tex
\section{Method}
\label{sec:method}

In this section, we address the issue of diversity collapse. Intuitively, diversity collapse can be seen as the feature variance reduction in human perception. Our analysis above reveals a key factor: \textit{\textbf{the standard policy entropy in flow models is unable to capture this collapse}}, undermining the effectiveness of existing entropy-based regularization for flow matching. This motivates us to define a new entropy concept, \textit{perceptual entropy}, which captures diversity collapse in perceptual space while remaining consistent with existing entropy-based regularization methods.

\subsection{Perceptual Entropy Proxy}
\label{subsec:perceptual-entropy}

To capture sample diversity via policy entropy, we introduce the concept of \textit{perceptual entropy}. It preserves the structure of standard policy entropy but is defined in the perceptual space rather than the generative latent space, as follows.

\begin{definition}[Heuristic Perceptual Entropy]
\label{def:perceptual-entropy}
Let $\phi = \mathcal{D}_{\text{vae}} \circ \mathcal{E}_{p}$ be a mapping where $\mathcal{E}_{p}$ is a perceptual encoder and $\mathcal{D}_{\text{vae}}$ is a VAE decoder. Given $k$ rollout noised samples from the exploration process Eq.~\eqref{eq:x_tdt_rewrite},  the perceptual probability is defined as:
\begin{equation}
p_\theta^{\mathrm{perc}}(\mathbf{x}_{t-1}^k\mid\mathbf{x}_t^k,c)
:=\mathcal{N}\!\left(\phi(\mathbf{x}_{t-1}^k);\,\phi(\mathbf{m}^k),\,\sigma_t^2\,\mathrm{d}t\,\mathbf{I}\right).
\end{equation}
where $\mathbf{m}^k$ denotes the old-policy mean $\mu_{\theta_{\mathrm{old}}}(\mathbf{x}_t^k,t,c)$, and the perceptual entropy is then:
\begin{equation} \label{eq:perceptualentropy}
\mathcal{H}_{\text{perc}}(p_\theta) =
\mathbb{E}_{t,k}\!\left[-\log p_\theta^{\mathrm{perc}}(\mathbf{x}_{t-1}^k\mid\mathbf{x}_t^k,c)\right].
\end{equation}
\end{definition}

Definition~\ref{def:perceptual-entropy} introduces an entropy notion that quantifies diversity collapse in the perceptual space. Specifically, noised samples $\mathbf{x}_t$ and the old-policy mean $\mathbf{m}^k$ in the flow model are first decoded into the pixel space by a VAE decoder $\mathcal{D}_{\text{vae}}$, and then mapped into the perceptual space by a perceptual encoder $\mathcal{E}_{p}$. In this space, perceptual likelihoods $p_\theta^{\mathrm{perc}}(\mathbf{x}_{t-1}^k \mid \mathbf{x}_t^k, c)$ are computed under a local Gaussian centered at $\phi(\mathbf{m}^k)$, and used to define the perceptual entropy in Eq.~\eqref{eq:perceptualentropy}.

\textbf{Perceptual Encoder.} Here, we discuss the choice of perceptual encoder. For neural network-based rewards, we use the PickScore encoder to avoid additional computational cost. For rule-based rewards without an internal vision encoder, we instead adopt a frozen vision foundation model such as CLIP~\cite{radford2021learning}. As discussed in Sec.~\ref{sec:generalizability} and Tab.~\ref{tab:runtime_overhead}, this additional perceptual encoder is both effective and efficient, yielding consistent gains with minimal overhead.

\textbf{Jacobian Term.} In Definition~\ref{def:perceptual-entropy}, the Jacobian term induced by the heterogeneous space transformation is omitted due to its computational overhead. A similar observation has been made in prior 3D generation approaches leveraging 2D priors~\citep{poole2022dreamfusion,wang2023prolificdreamer,lin2023magic3d,wang2024animatabledreamer}. Following standard on-policy REINFORCE-style approximations~\citep{ahmadian2024back,hu2025reinforcepp,liu2025understanding}, perceptual scores are computed on fixed old-policy rollouts and treated as shaped rewards, while the GRPO likelihood ratio handles the policy correction.

To further understand perceptual entropy, we provide a heuristic analysis of its relation to the conditional feature-space variance under a local mean-preserving approximation.

\begin{remark}[Heuristic Connection to Conditional Feature-Space Variance]
\label{prop:perceptual-entropy-variance}
Let $\mathbf{z}_{t-1}=\phi(\mathbf{x}_{t-1})$. Assume that, within a local reverse step, the perceptual encoder is approximately mean-preserving, i.e., $\mathbb{E}\!\left[\mathbf{z}_{t-1}\mid \mathbf{x}_t\right]\approx \phi(\mathbf{m})$. Then perceptual entropy approximately tracks the expected conditional feature-space variance:
\begin{equation}
\mathcal{H}_{\text{perc}}(p_\theta) \approx
\mathbb{E}_{t,\mathbf{x}_t}\!\left[
\frac{\mathrm{Var}(\mathbf{z}_{t-1}\mid \mathbf{x}_t)}
{2\sigma_t^2 \mathrm{d}t}\right] - C_t.
\end{equation}
\end{remark}

This Remark suggests that perceptual entropy can capture local perceptual dispersion, and is therefore aligned with existing entropy-based methods~\cite{cui2025entropy,jiang2025rethinking}, similar with Property~\ref{prop:constant-entropy}.

\textbf{Empirical Entropy Mechanism.} To better understand this mechanism, we empirically study the relationship between perceptual entropy and reward in flow models, in analogy to the empirical finding in LLMs~\cite{cui2025entropy}. We therefore propose a similar empirical entropy mechanism for flow models:
\begin{equation}\label{eq:entropy_mechanism_flow}
    \mathcal{R} = -a\exp(\mathcal{H}_{\text{perc}}) + b.
\end{equation}
where $a$ and $b$ are empirical coefficients.

\begin{wrapfigure}[11]{r}{0.65\textwidth}
\centering
\vspace{-0.8em}
\includegraphics[width=\linewidth]{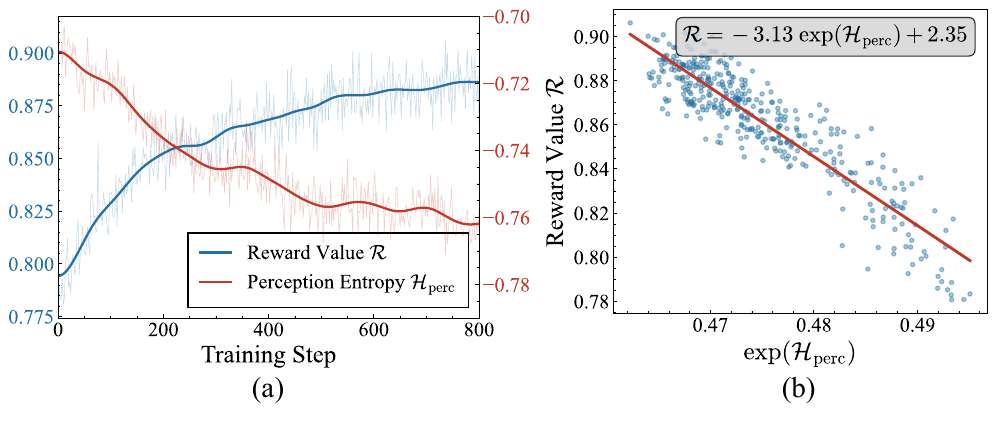}
\vspace{-1.2em}
\caption{\textbf{Relationship between Reward and Perceptual Entropy.}}
\vspace{-1.0em}
\label{fig:empirical_entropy_validation}
\end{wrapfigure}

As shown in Fig.~\ref{fig:empirical_entropy_validation}, we report the relationship between reward $\mathcal{R}$ and perceptual entropy $\mathcal{H}_\text{perc}$. These results along with the theoretical analyses  demonstrate that perceptual entropy provides \textbf{\textit{an effective identifier of diversity}}, \textbf{\textit{aligning with existing research}} for LLMs~\cite{cui2025entropy}. 

This naturally motivates regularizing perceptual entropy to mitigate diversity collapse. Here, we introduce two approaches based on perceptual entropy: (1) \textit{Perceptual Entropy-Based Constraint}, which treats perceptual entropy as a principled entropy identifier and incorporates it into a standard entropy-regularization framework; and (2) \textit{Perceptual Constraints on the VAE}, which leverages perceptual entropy to encourage consistency between the original and collapsed perceptual spaces.

\subsection{Perceptual-Regularized Strategies for GRPO}
\label{sec:perceptual-shaping}

\textbf{Objective.} Standard RLHF inherently drives the policy toward a narrow subset of high-reward samples, thereby inducing diversity collapse. To mitigate this issue, we augment the RLHF objective in Eq.~\eqref{eq:rlhf} with a perceptual regularization term $\Omega(p_\theta^\text{perc})$, as follows:
\begin{equation}\label{eq:general-perceptual-objective}
\max_{\theta}\ \mathcal{J}(\theta)
=\mathbb{E}_{\mathbf{x}\sim p_\theta}\bigl[\mathcal{R}(\mathbf{x}, c)\bigr]
+\lambda\,\Omega(p_\theta^\text{perc}).
\end{equation}
Here, $\Omega(p_\theta^\text{perc})$ is a perceptual regularizer that encourages exploration of a broader perceptual space, while $\mathbb{E}_{\mathbf{x}\sim p_\theta}\bigl[\mathcal{R}(\mathbf{x}, c)\bigr]$ promotes high-preference samples through reward maximization.

\noindent\textbf{Perceptual Entropy Constraint.}\label{sec:pec}  Since perceptual entropy serves as a reliable surrogate for conditional variance as shown in Remark~\ref{prop:perceptual-entropy-variance}, a natural choice is to maximize it directly, as follows:
\begin{equation}\label{eq:empirical-entropy-constraint}
\max_{\theta}\ \mathcal{J}_\text{PEC}(\theta)
=\mathbb{E}_{\mathbf{x}\sim p_\theta}\bigl[\mathcal{R}(\mathbf{x},c)\bigr]
+\lambda\,\mathcal{H}_{\text{perc}}(p_\theta).
\end{equation}
Here, we set $\Omega=\mathcal{H}_{\text{perc}}(p_\theta)$ to encourage the policy to explore a broader perceptual space.

\noindent\textbf{Perceptual Constraints on Generation Space.}\label{sec:pec-vae}
While PEC directly promotes variance, an alternative is to anchor the policy distribution to the diversity-preserving generation-space prior by setting $\Omega=-D_{\mathrm{KL}}(p_\theta^{\mathrm{perc}}\Vert p_\theta^{\mathrm{vae}})$, as follows:
\begin{equation}\label{eq:kl-constraint}
\max_{\theta}\ \mathcal{J}_\text{PCVAE}(\theta)
=\mathbb{E}_{\mathbf{x}\sim p_\theta}\bigl[\mathcal{R}(\mathbf{x},c)\bigr]
-\lambda\,D_{\mathrm{KL}}\!\left(p_\theta^{\mathrm{perc}}\,\Vert\,p_\theta^{\mathrm{vae}}\right).
\end{equation}

\textbf{Perceptual Reward Shaping.} Although theoretically sound, optimizing PEC and PCVAE as independent auxiliary losses introduces empirical challenges. Direct entropy maximization injects unstructured stochasticity~\cite{vanlioglu2025entropy}, leading to competing gradients against the primary reward and instability training,  as shown in the red line in Fig.~\ref{fig:kl_cmp}(a). To address this, we incorporate perceptual constraints directly into the reward signal, a strategy we term \textit{perceptual reward shaping}. The shaped rewards are computed by aggregating instantaneous perceptual quantities across reverse timesteps:
\begin{equation}\label{eq:reward_redefine_main}
\begin{aligned}
\mathcal{J}_\text{PEC}(\theta) = \mathbb{E}_{c,k}\!\Big[\underbrace{\mathcal{R}(\mathbf{x}_0^k, c) - \lambda\,\mathbb{E}_t\!\left[\log p_\theta^{\mathrm{perc}}(\mathbf{x}_{t-1}^k\mid\mathbf{x}_t^k,c)\right]}_{\text{Shaped Reward } \tilde{\mathcal{R}}_{\text{PEC}}^k}\Big].
\end{aligned}
\end{equation}
This incorporates perceptual constraints directly into the reward signal, ensuring that perceptual and task rewards are jointly normalized within the same GRPO rollout group, as shown by the blue line in Fig.~\ref{fig:kl_cmp}(a). Similarly, the shaped reward for the PCVAE constraint is given by:
\begin{equation}\label{eq:shaped-rewards}
\tilde{\mathcal{R}}_{\text{PCVAE}}^k = \mathcal{R}(\mathbf{x}_0^k,c) - \lambda\,\mathbb{E}_t\!\left[\log p_\theta^{\mathrm{perc}}(\mathbf{x}_{t-1}^k\mid\mathbf{x}_t^k,c) - \log p_\theta^{\mathrm{vae}}(\mathbf{x}_{t-1}^k\mid\mathbf{x}_t^k,c)\right].
\end{equation}

To optimize the shaped rewards, we adopt the standard GRPO clipped surrogate, as follows:
\begin{equation}\label{eq:grpo_perc}
\mathcal{J}_{m\text{-GRPO}}(\theta)
= \mathbb{E}_{c,\,\{\mathbf{x}_t^k\}_{t,k}}\!\Big[\min\big(\rho_{t,k}(\theta)\,\hat{\mathcal{A}}_{k}^{m},\;
\operatorname{clip}(\rho_{t,k}(\theta), 1\!-\!\epsilon, 1\!+\!\epsilon)\,\hat{\mathcal{A}}_{k}^{m}\big)\Big].
\end{equation}
where $\hat{\mathcal{A}}_{k}^{m}$ is the advantage in Eq.\eqref{eq:grpo_adv}, and $m\in\{\mathrm{PEC},\mathrm{PCVAE}\}$ denotes the constraint type.

\begin{wrapfigure}[10]{r}{0.65\textwidth}
    \centering
    \vspace{-0.7em}
    \includegraphics[width=\linewidth]{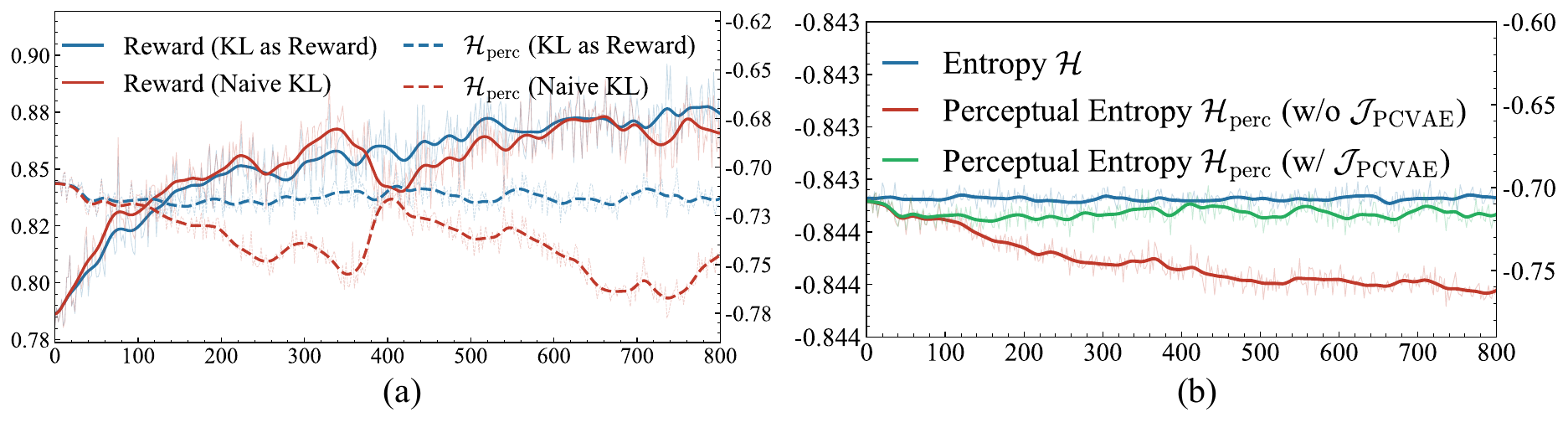}
    \vspace{-1.2em}
    \caption{\textbf{Optimization Comparison.} (a) Native KL (red) versus KL-as-reward (blue), showing reward and entropy dynamics; (b) entropy with (blue) and without (red) $\mathcal{J}_\mathrm{PCVAE}$.}
    \vspace{-1.2em}
    \label{fig:kl_cmp}
\end{wrapfigure}

As illustrated in Fig.\ref{fig:kl_cmp}, our method effectively mitigates these optimization issues. Compared to Native KL, which introduces competing gradients, perceptual reward shaping aligns perceptual constraints with the primary objective and stabilizes training dynamics, as shown by the blue line in Fig.\ref{fig:kl_cmp}(a). Moreover, incorporating the PCVAE constraint regularizes entropy maximization, leading to higher and more stable entropy and improved output diversity, as shown in Fig.~\ref{fig:kl_cmp}(b).

%% file: sections/5_experiments.tex
\section{Experiments}
\label{sec:experiments}

\subsection{Experimental Setup}

\textbf{Implementation.} We build upon Flow-GRPO~\yrcite{liu2025flow} and evaluate two base models, FLUX.dev~\yrcite{flux2024} and SD3.5-M~\yrcite{sd3}. For FLUX.dev, we use PickScore~\yrcite{kirstain2023pick} as the neural reward and compute perceptual entropy with PickScore and DINO~\yrcite{DBLP:journals/corr/abs-2104-14294} encoders. For SD3.5-M, we follow~\yrcite{liu2025flow} and adopt rule-based rewards, with evaluation based on CLIP~\yrcite{radford2021learning} and DINO encoders. FLUX.dev is trained on Pick-a-Pic~\yrcite{kirstain2023pick} with 37,523 prompts, and evaluated on HPD~\yrcite{wu2023human}, while SD3.5-M is evaluated on GenEval following~\yrcite{liu2025flow}. For PEC and PCVAE, we tune $\lambda \in \{0.03, 0.05, 0.10\}$, with larger values to encourage diversity and smaller ones for quality. All other hyperparameters kept as in Flow-GRPO~\yrcite{liu2025flow}.

\textbf{Metrics.} Following MixGRPO~\cite{li2025mixgrpo}, we evaluate generation quality using ImageReward~\yrcite{xu2023imagereward}, PickScore~\yrcite{kirstain2023pick}, Aesthetic Predictor v2.5~\yrcite{discus0434_aesthetic_2024}, CLIP~\yrcite{radford2021learning}, and Unified Reward~\yrcite{wang2025unified}. For diversity, we sample 30 outputs per prompt for 400 HPD test prompts (12,000 images per method). We report DINO and CLIP feature variances, as well as Vendi Scores~\yrcite{friedman2023vendi}, which measure the effective rank of the normalized similarity matrix. Vendi Scores are computed in four feature spaces: $\mathrm{V.S.}_{\mathrm{CLIP}}$ and $\mathrm{V.S.}_{\mathrm{DINO}}$ capture semantic and visual diversity in CLIP and DINO spaces, while $\mathrm{V.S.}_{\mathrm{IR}}$ and $\mathrm{V.S.}_{\mathrm{PS}}$ measure diversity in ImageReward and PickScore spaces. We further report an overall score by averaging min-max normalized quality and diversity metrics for each table.

\noindent
\textbf{Compared Methods.} We compare standard entropy regularization~\yrcite{schulman2017proximal}, Clip-Higher~\yrcite{yu2025dapo}, and covariance-aware Clip-Cov/KL-Cov~\yrcite{cui2025entropy}. Details are provided in App.~\ref{app:cov-flow}. We test both their original entropy terms and our perceptual entropy $\mathcal{H}_{\mathrm{perc}}$, then compare directly with PCVAE and PEC. Hyperparameters follow Flow-GRPO unless stated otherwise: entropy weight $0.05$, Clip-Cov/KL-Cov rate $0.25$, Clip-Higher threshold $0.1$, and KL weight $0.001$~\citep{liu2025flow}. 

\subsection{Main Results}
\input{tables/main_results.tex}
\input{tables/sota.tex}

\textbf{Comparison with Entropy-Based Regularization.} As shown in Tab.~\ref{tab:main_results}, native per-step entropy $\mathcal{H}$ provides limited gains under flow matching, whereas perceptual entropy $\mathcal{H}_{\mathrm{perc}}$ is more effective. Standard entropy regularization achieves an overall score of 0.360, below Flow-GRPO with KL at 0.459. Replacing $\mathcal{H}$ with $\mathcal{H}_{\mathrm{perc}}$ improves the score to 0.498. Our PEC further improves performance, reaching 0.728 overall and achieving a diversity average of 0.989 in DINO perceptual space.

\textbf{Comparison with SoTA Models.} As shown in Tab.~\ref{tab:sota_results}, existing text-to-image models exhibit a clear quality–diversity trade-off. SDXL achieves the highest diversity of 0.937 but a lower quality of 0.205, whereas Qwen-Image attains higher quality of 0.711 with substantially reduced diversity of 0.045. In contrast, PEC with perceptual entropy achieves the best overall score of 0.702 while maintaining strong diversity of 0.773, indicating a more favorable quality–diversity trade-off.

\begin{wrapfigure}[10]{r}{0.5\textwidth}
    \centering
    \vspace{-0.8em}
    \includegraphics[width=\linewidth]{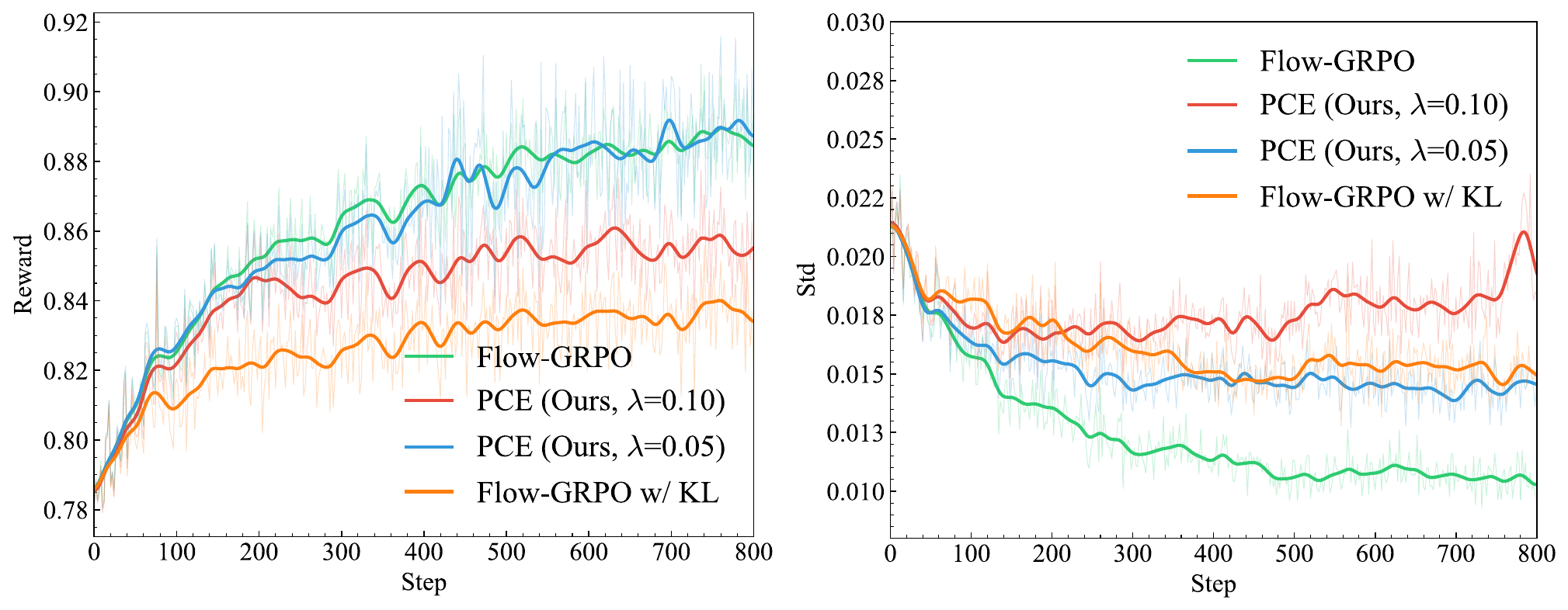}
    \vspace{-0.8em}
    \caption{\textbf{Training Dynamics of Flow-GRPO, Flow-GRPO with KL, and PEC with Perceptual Entropy.} (a) Mean reward. (b) Reward standard deviation.}    \label{fig:exp_std_rw}
\end{wrapfigure}

\textbf{Trade-Off between Diversity and Quality.} Fig.~\ref{fig:exp_std_rw} shows that Flow-GRPO improves quality but reduces diversity. KL alleviates this drop at a quality cost. PEC gives a better balance: $\lambda=0.10$ preserves diversity, while $\lambda=0.05$ keeps quality close to Flow-GRPO.

\textbf{Visualization \& User Study.} The teaser in Fig.\ref{fig:teaser} and the extended visualizations in Fig.\ref{fig:app_vis}, demonstrate that PEC preserves visual diversity across prompts and domains without noticeably degrading image quality. User study details are provided in App.\ref{app:user_study}.

\subsection{Ablation Study}

\begin{table*}[t]
\centering
\begin{minipage}[t]{0.48\textwidth}
\input{tables/ablation_perc_entropy.tex}
\end{minipage}
\hfill
\begin{minipage}[t]{0.48\textwidth}
\input{tables/runtime_overhead.tex}
\end{minipage}
\end{table*}

\textbf{Effectiveness of Perceptual Entropy.} Tab.~\ref{tab:main_results} shows that perceptual entropy consistently improves existing methods: KL-Cov rises from $0.437$ to $0.588$ overall. Tab.~\ref{tab:ablation_perc_entropy} further isolates this effect in PEC, where replacing the native per-step entropy with $\mathcal{H}_{\mathrm{perc}}$ improves Aes.v2.5 from $6.372$ to $6.412$.

\textbf{Perceptual Reward Shaping vs. Native Regularization.} Fig.~\ref{fig:kl_cmp} shows that perceptual reward shaping is more stable than directly optimizing the native KL or entropy regularization. This stabilizer improves quality while preserving the diversity benefit from perceptual entropy.

\textbf{Perceptual Encoder.} \label{sec:generalizability} Our perceptual entropy can be computed not only with the reward model encoder, but also with a frozen DINO or CLIP encoder. As shown in Table~\ref{tab:main_results}, the performance gains generalize across different perceptual encoders. As shown in Table~\ref{tab:runtime_overhead}, the additional iteration-time overhead is marginal and peak memory usage is nearly unchanged, since the perceptual encoder is much smaller than the flow model and is evaluated asynchronously from generation and optimization.

%% file: tables/main_results.tex
\begin{table}[t]
\centering
\caption{\textbf{Entropy Regularization Comparison.} Gray-shaded rows denote our methods. Abbreviations: P.S.=PickScore, I.R.=ImageReward, U.R.=Unified Reward, and V.S.=Vendi Score.}
\vspace{0.4em}
\label{tab:main_results}
\resizebox{\textwidth}{!}{%
\small
\setlength{\tabcolsep}{6pt}
\begin{tabular}{llcccccccccccccccc}
\toprule
\multirow{2}{*}{Method} & \multirow{2}{*}{Perceptual Space} & \multicolumn{7}{c}{Quality$\uparrow$} & \multicolumn{7}{c}{Diversity$\uparrow$} & \multirow{2}{*}{Overall} \\
\cmidrule(lr){3-9} \cmidrule(lr){10-16}
 & & P.S. & Aes.v2.5 & I.R. & CLIP & HPSv2.1 & U.R. & Norm.Avg & DINO & CLIP & $\mathrm{V.S.}_{\mathrm{CLIP}}$ & $\mathrm{V.S.}_{\mathrm{DINO}}$ & $\mathrm{V.S.}_{\mathrm{IR}}$ & $\mathrm{V.S.}_{\mathrm{PS}}$ & Norm.Avg & \\
\midrule
\multicolumn{17}{c}{\textit{Base Model \& Baseline without Entropy Regularization}} \\
\midrule
Flux Dev & -- & 0.226 & 5.837 & 1.089 & 0.388 & 0.312 & 3.540 & \cellcolor{cvprblue!0}0.000 & 104.76 & 1.11 & \textbf{13.45} & 27.05 & 5.73 & 1.47 & \cellcolor{darkgreen!30}0.753 & \cellcolor{orange!15}0.376 \\
Flow-GRPO (\textit{Baseline}) & -- & 0.243 & 6.305 & 1.375 & 0.396 & 0.337 & 3.738 & \cellcolor{cvprblue!27}0.685 & 84.17 & 0.81 & 7.18 & 16.91 & 2.39 & 0.85 & \cellcolor{darkgreen!2}0.047 & \cellcolor{orange!15}0.366 \\
\midrule
\multicolumn{17}{c}{\textit{Existing Methods with Native Entropy $\mathcal{H}$ or Perceptual Entropy $\mathcal{H}_{\mathrm{perc}}$}} \\
\midrule
Flow-GRPO w/ KL & -- & 0.233 & 5.998 & 1.289 & 0.396 & 0.323 & 3.582 & \cellcolor{cvprblue!13}0.319 & 101.11 & 1.08 & 12.32 & 25.42 & 4.99 & 1.21 & \cellcolor{darkgreen!24}0.599 & \cellcolor{orange!18}0.459 \\
Clip-Higher & -- & 0.240 & 6.273 & 1.367 & 0.392 & 0.336 & 3.741 & \cellcolor{cvprblue!25}0.617 & 81.83 & 0.79 & 7.25 & 15.51 & 1.95 & 0.96 & \cellcolor{darkgreen!1}0.025 & \cellcolor{orange!13}0.321 \\
Entropy Reg. & VAE & 0.240 & 5.965 & 1.362 & 0.398 & 0.332 & 3.615 & \cellcolor{cvprblue!19}0.473 & 88.32 & 0.94 & 9.15 & 20.02 & 3.29 & 0.97 & \cellcolor{darkgreen!10}0.247 & \cellcolor{orange!14}0.360 \\
\rowcolor{gray!20} Entropy Reg. & PickScore & 0.239 & 6.079 & 1.365 & 0.396 & 0.332 & 3.604 & \cellcolor{cvprblue!19}0.478 & 96.25 & 1.01 & 10.71 & 25.75 & 4.34 & 1.32 & \cellcolor{darkgreen!21}0.519 & \cellcolor{orange!20}0.498 \\
Clip-Cov & VAE & 0.241 & 6.128 & 1.426 & 0.399 & 0.335 & 3.672 & \cellcolor{cvprblue!24}0.610 & 89.94 & 0.89 & 10.17 & 17.73 & 3.49 & 1.15 & \cellcolor{darkgreen!11}0.286 & \cellcolor{orange!18}0.448 \\
\rowcolor{gray!20} Clip-Cov & PickScore & 0.242 & 6.153 & 1.448 & 0.402 & 0.337 & 3.698 & \cellcolor{cvprblue!27}0.680 & 100.46 & 0.97 & 10.08 & 25.21 & 3.77 & 1.23 & \cellcolor{darkgreen!19}0.466 & \cellcolor{orange!23}0.573 \\
KL-Cov & VAE & 0.243 & 6.143 & 1.431 & 0.397 & 0.345 & 3.720 & \cellcolor{cvprblue!28}0.694 & 94.04 & 0.84 & 8.71 & 17.22 & 2.78 & 0.95 & \cellcolor{darkgreen!7}0.179 & \cellcolor{orange!17}0.437 \\
\rowcolor{gray!20} KL-Cov & PickScore & 0.243 & 6.296 & 1.474 & 0.398 & \underline{0.346} & 3.746 & \cellcolor{cvprblue!31}0.783 & 96.82 & 0.91 & 9.96 & 23.82 & 3.58 & 1.17 & \cellcolor{darkgreen!16}0.393 & \cellcolor{orange!24}0.588 \\
\midrule
\multicolumn{17}{c}{\textit{The Proposed Methods with Perceptual Entropy $\mathcal{H}_{\mathrm{perc}}$ from DINO Space}} \\
\midrule
\rowcolor{gray!20} PCVAE (Ours, $\lambda=0.03$) & DINO & \underline{0.244} & 6.362 & \textbf{1.497} & 0.405 & 0.344 & \underline{3.774} & \cellcolor{cvprblue!35}\underline{0.876} & 92.76 & 0.99 & 9.67 & 20.14 & 3.47 & 1.15 & \cellcolor{darkgreen!14}0.342 & \cellcolor{orange!24}0.609 \\
\rowcolor{gray!20} PEC (Ours, $\lambda=0.05$) & DINO & 0.238 & \underline{6.405} & 1.428 & 0.406 & 0.336 & 3.717 & \cellcolor{cvprblue!30}0.746 & 98.94 & 1.18 & 11.28 & 25.45 & 5.07 & 1.42 & \cellcolor{darkgreen!26}0.640 & \cellcolor{orange!28}0.693 \\
\rowcolor{gray!20} PCVAE (Ours, $\lambda=0.10$) & DINO & 0.237 & 6.176 & 1.328 & 0.401 & 0.332 & 3.665 & \cellcolor{cvprblue!22}0.546 & \textbf{118.09} & 1.26 & 12.68 & \underline{27.94} & 6.17 & 1.57 & \cellcolor{darkgreen!35}\underline{0.884} & \cellcolor{orange!29}0.715 \\
\rowcolor{gray!20} PEC (Ours, $\lambda=0.10$) & DINO & 0.233 & 6.195 & 1.327 & 0.396 & 0.332 & 3.640 & \cellcolor{cvprblue!19}0.468 & \underline{116.97} & \textbf{1.34} & \underline{13.23} & \textbf{30.41} & \textbf{7.11} & \textbf{1.63} & \cellcolor{darkgreen!40}\textbf{0.989} & \cellcolor{orange!29}\underline{0.728} \\
\midrule
\multicolumn{17}{c}{\textit{The Proposed Methods with Perceptual Entropy $\mathcal{H}_{\mathrm{perc}}$ from PickScore Space}} \\
\midrule
\rowcolor{gray!20} PCVAE (Ours, $\lambda=0.03$) & PickScore & \textbf{0.246} & 6.388 & \underline{1.489} & \underline{0.408} & \textbf{0.348} & \textbf{3.816} & \cellcolor{cvprblue!38}\textbf{0.962} & 92.07 & 0.98 & 9.44 & 19.88 & 3.16 & 1.08 & \cellcolor{darkgreen!12}0.302 & \cellcolor{orange!25}0.632 \\
\rowcolor{gray!20} PCVAE (Ours, $\lambda=0.10$) & PickScore & 0.240 & 6.211 & 1.335 & 0.397 & 0.335 & 3.683 & \cellcolor{cvprblue!23}0.581 & 108.86 & 1.22 & 11.69 & 27.63 & 6.16 & \textbf{1.63} & \cellcolor{darkgreen!33}0.813 & \cellcolor{orange!28}0.697 \\
\rowcolor{gray!20} PEC (Ours, $\lambda=0.05$) & PickScore & 0.240 & \textbf{6.412} & 1.446 & \textbf{0.412} & 0.338 & 3.748 & \cellcolor{cvprblue!34}0.842 & 105.01 & 1.09 & 10.73 & 26.11 & 4.86 & 1.42 & \cellcolor{darkgreen!25}0.626 & \cellcolor{orange!29}\textbf{0.734} \\
\rowcolor{gray!20} PEC (Ours, $\lambda=0.10$) & PickScore & 0.235 & 6.242 & 1.340 & 0.401 & 0.334 & 3.674 & \cellcolor{cvprblue!23}0.568 & 108.81 & \underline{1.27} & 12.99 & 27.22 & \underline{6.44} & \underline{1.59} & \cellcolor{darkgreen!34}0.858 & \cellcolor{orange!29}0.713 \\
\bottomrule
\end{tabular}%
}
\end{table}

%% file: tables/sota.tex
\begin{table}[t!]
\centering
\caption{\textbf{SoTA Comparison.}  Type denotes post-training: N=None, U=Unknown, D=DPO, and R=reinforcement fine-tuning.}
\vspace{0.4em}
\label{tab:sota_results}
\resizebox{\textwidth}{!}{%
\small
\setlength{\tabcolsep}{2pt}
\begin{tabular}{llcccccccccccccccc}
\toprule
\multirow{2}{*}{Method} & \multirow{2}{*}{Type} & \multicolumn{7}{c}{Quality$\uparrow$} & \multicolumn{7}{c}{Diversity$\uparrow$} & \multirow{2}{*}{Overall} \\
\cmidrule(lr){3-9} \cmidrule(lr){10-16}
 & & PickScore & Aes.v2.5 & ImageReward & CLIP & HPSv2.1 & Unified Reward & Norm.Avg & DINO & CLIP & $\mathrm{V.S.}_{\mathrm{CLIP}}$ & $\mathrm{V.S.}_{\mathrm{DINO}}$ & $\mathrm{V.S.}_{\mathrm{IR}}$ & $\mathrm{V.S.}_{\mathrm{PS}}$ & Norm.Avg & \\
\midrule
SDXL~\cite{podell2023sdxl} & N & 0.226 & 5.836 & 0.980 & 0.411 & 0.294 & 3.410 & \cellcolor{cvprblue!8}0.205 & \textbf{124.96} & 1.20 & \underline{13.7} & \textbf{32.4} & \textbf{10.1} & 1.48 & \cellcolor{darkgreen!37}\textbf{0.937} & \cellcolor{orange!23}0.571 \\
Hunyuan-DiT~\cite{kong2024hunyuandit} & U & 0.224 & 5.659 & 1.080 & 0.406 & 0.303 & 3.428 & \cellcolor{cvprblue!8}0.190 & 121.73 & 1.09 & \textbf{13.8} & \underline{28.7} & \underline{7.38} & 1.35 & \cellcolor{darkgreen!31}0.763 & \cellcolor{orange!19}0.476 \\
SD-3.5-M~\cite{esser2024scaling} & D & 0.227 & 5.711 & 1.113 & 0.404 & 0.301 & 3.553 & \cellcolor{cvprblue!10}0.256 & 115.56 & 1.17 & 12.4 & 28.1 & 6.76 & 1.39 & \cellcolor{darkgreen!29}0.721 & \cellcolor{orange!20}0.489 \\
Kolors~\cite{kolors} & N & 0.225 & 5.994 & 0.977 & 0.385 & 0.314 & 3.362 & \cellcolor{cvprblue!6}0.143 & 111.80 & 1.05 & 10.8 & 28.3 & 7.01 & 1.31 & \cellcolor{darkgreen!24}0.601 & \cellcolor{orange!15}0.372 \\
SD-3.5-L~\cite{esser2024scaling} & D & 0.229 & 5.897 & 1.122 & 0.408 & 0.302 & 3.619 & \cellcolor{cvprblue!14}0.359 & \underline{123.01} & 1.15 & 12.2 & 27.9 & 6.93 & 1.4 & \cellcolor{darkgreen!29}0.737 & \cellcolor{orange!22}0.548 \\
HiDream-Dev~\cite{hidreami1technicalreport} & U & 0.229 & 6.049 & 1.329 & 0.398 & 0.325 & 3.781 & \cellcolor{cvprblue!21}0.527 & 82.55 & 0.88 & 7.92 & 15.8 & 2.57 & 0.94 & \cellcolor{darkgreen!2}0.039 & \cellcolor{orange!11}0.283 \\
Qwen-Image~\cite{wu2025qwenimagetechnicalreport} & D\&R & 0.232 & 6.039 & 1.409 & \textbf{0.417} & 0.324 & \textbf{3.921} & \cellcolor{cvprblue!28}0.711 & 90.41 & 0.79 & 8.12 & 16.6 & 2.34 & 0.93 & \cellcolor{darkgreen!2}0.045 & \cellcolor{orange!15}0.378 \\
\midrule
Flux Dev \textit{(Base Model)} & U & 0.226 & 5.837 & 1.089 & 0.388 & 0.312 & 3.540 & \cellcolor{cvprblue!9}0.215 & 104.76 & 1.11 & 13.45 & 27.05 & 5.73 & 1.47 & \cellcolor{darkgreen!27}0.669 & \cellcolor{orange!18}0.442 \\
\rowcolor{gray!20} PCVAE (Ours, $\lambda=0.03$) & R & \textbf{0.246} & \underline{6.388} & \textbf{1.489} & 0.408 & \textbf{0.348} & \underline{3.816} & \cellcolor{cvprblue!37}\textbf{0.917} & 92.07 & 0.98 & 9.44 & 19.88 & 3.16 & 1.08 & \cellcolor{darkgreen!10}0.241 & \cellcolor{orange!23}0.579 \\
\rowcolor{gray!20} PCVAE (Ours, $\lambda=0.10$) & R & \underline{0.240} & 6.211 & 1.335 & 0.397 & 0.335 & 3.683 & \cellcolor{cvprblue!26}0.645 & 108.86 & \underline{1.22} & 11.69 & 27.63 & 6.16 & \textbf{1.63} & \cellcolor{darkgreen!29}0.727 & \cellcolor{orange!27}0.686 \\
\rowcolor{gray!20} PEC (Ours, $\lambda=0.05$) & R & \underline{0.240} & \textbf{6.412} & \underline{1.446} & \underline{0.412} & \underline{0.338} & 3.748 & \cellcolor{cvprblue!33}\underline{0.832} & 105.01 & 1.09 & 10.73 & 26.11 & 4.86 & 1.42 & \cellcolor{darkgreen!22}0.546 & \cellcolor{orange!28}\underline{0.689} \\
\rowcolor{gray!20} PEC (Ours, $\lambda=0.10$) & R & 0.235 & 6.242 & 1.340 & 0.401 & 0.334 & 3.674 & \cellcolor{cvprblue!25}0.630 & 108.81 & \textbf{1.27} & 12.99 & 27.22 & 6.44 & \underline{1.59} & \cellcolor{darkgreen!31}\underline{0.773} & \cellcolor{orange!28}\textbf{0.702} \\
\bottomrule
\end{tabular}%
}
\end{table}

%% file: tables/ablation_perc_entropy.tex
\centering
\begingroup
\captionof{table}{\textbf{Perceptual Entropy Ablation.} Our perceptual entropy $\mathcal{H}_{\mathrm{perc}}$  for PEC method is replaced with native entropy  $\mathcal{H}$ to examine its effectiveness.}
\label{tab:ablation_perc_entropy}
\resizebox{\linewidth}{!}{%
\small
\begin{tabular}{llcccc}
\toprule
Method & Entropy & Aes.v2.5$\uparrow$ & ImageReward$\uparrow$ & CLIP$\uparrow$ & DINO$\uparrow$ \\
\midrule
PEC & $\mathcal{H}$ & 6.372 & 1.439 & 0.405 & 89.98 \\
\rowcolor{gray!20}{PEC} & \textbf{$\mathcal{H}_\mathrm{perc}$} & \textbf{6.412} & \textbf{1.446} & \textbf{0.412} & \textbf{105.01} \\
\bottomrule
\end{tabular}%
}
\endgroup

%% file: tables/runtime_overhead.tex
\centering
\captionof{table}{{\textbf{Computational Cost.} Iteration time is normalized by Flow-GRPO. Additional cost is limited.}}
\label{tab:runtime_overhead}
\resizebox{\linewidth}{!}{%
\small
\setlength{\tabcolsep}{4pt}
{
\begin{tabular}{lccc}
\toprule
Method & Perceptual Encoder & Iteration Time & Peak Memory \\
\midrule
Flow-GRPO    & --          & $1.00\times$ & 78.7G \\
\rowcolor{gray!20}{PEC (Ours)}   & PickScore Encoder   & $1.01\times$ & 78.7G \\
\rowcolor{gray!20}{PEC (Ours)}  & DINO Encoder        & $1.03\times$ & 79.2G \\
\bottomrule
\end{tabular}%
}}

%% file: sections/7_related_works.tex
\section{Related Work}

\noindent \textbf{RLHF for Flow Matching.}
RLHF aligns generative models with human preferences via reward signals~\citep{liu2026alignment}. Building on early progress for diffusion models~\citep{black2023training,fan2023dpok,wallace2024diffusion,yang2024using}, recent work extends RLHF to flow matching by enabling stochastic rollouts and GRPO-style optimization~\citep{xue2025dancegrpo,liu2025flow}. Follow-up studies improve efficiency~\citep{he2025tempflow,fu2025dynamic,li2025branchgrpo}, preference modeling~\citep{zheng2025diffusionnft}, theory~\citep{wang2025coefficients}, and reward hacking~\citep{wang2025grpo}. Despite these advances, RLHF for flow models often suffers from diversity collapse, and its underlying mechanism remains unclear.

\noindent \textbf{Entropy in LLM Reinforcement Finetuning.}
Entropy is widely used to characterize exploration and predict downstream gains in RLVR~\citep{haarnoja2018soft,schulman2017proximal,cui2025entropy}. Token-level analyses further show that high-entropy tokens often correspond to key forking points and drive a disproportionate share of learning~\citep{bigelow2024forking,wang2025beyond}. Prior work thus studies entropy-aware objectives and regularization, together with stabilizers such as loss reweighting and clip-higher, to mitigate entropy collapse~\citep{vanlioglu2025entropy,cheng2025reasoning,adamczyk2025average,liu2025prorl,yu2025dapo}. In contrast, entropy in flow models can behave substantially differently, yet remains underexplored.

\noindent \textbf{Entropy in Diffusion and Flow Models.} Recent studies have explored entropy optimization in diffusion and flow generators under reward optimization~\citep{desanti2025provable,desanti2025fdc,celik2025diffmaxent,uehara2024understanding,maduabuchi2026ecfm,gao2026fmer}. Existing approaches include variational bounds on diffusion-policy entropy~\citep{celik2025diffmaxent}, analytic entropy via change-of-variable formulations for flow policies~\citep{gao2026fmer}, trajectory-level entropy-rate constraints~\citep{maduabuchi2026ecfm}, and terminal-distribution entropy maximization on the data manifold~\citep{desanti2025fdc}. However, these methods estimate entropy in the generative space and often rely on auxiliary networks~\citep{celik2025diffmaxent}, theoretical approximations~\citep{desanti2025provable}, or gradient-dependent formulations~\citep{gao2026fmer,desanti2025fdc}, which limit their effectiveness in flow-based RL frameworks.

%% file: sections/6_conclusion.tex
\section{Conclusion}
\label{sec:conclusion}

In this work, we investigate diversity collapse in flow-based RLHF from the perspective of entropy mechanism, and propose a novel perceptual entropy framework with two entropy-regularization strategies that improve alignment without sacrificing diversity. First, we identify and analyze a flow-specific paradox in which perceptual diversity collapses despite the practical policy entropy surrogate remaining nearly constant. We show that this phenomenon arises from the fixed reverse-process noise schedule, whereas diversity collapse is fundamentally driven by the mode-seeking nature of policy-gradient optimization. Second, we introduce perceptual entropy, which captures diversity in a perceptual feature space while preserving key properties of standard entropy, and further propose PEC and PCVAE to maintain perceptual diversity during reward optimization. Extensive experiments on FLUX.dev demonstrate that our method substantially improves the quality-diversity trade-off.

\textbf{Limitations.} This work employs GRPO as a representative RLHF framework and focuses on its application to text-to-image generation. Future work may extend this framework to other RLHF methods, such as DPO and DDPO, and broader generative modalities including video and audio.

\newpage

%% file: sections/8_appendix.tex
\begin{figure}[!h]
    \centering
    \includegraphics[width=1.0\textwidth]{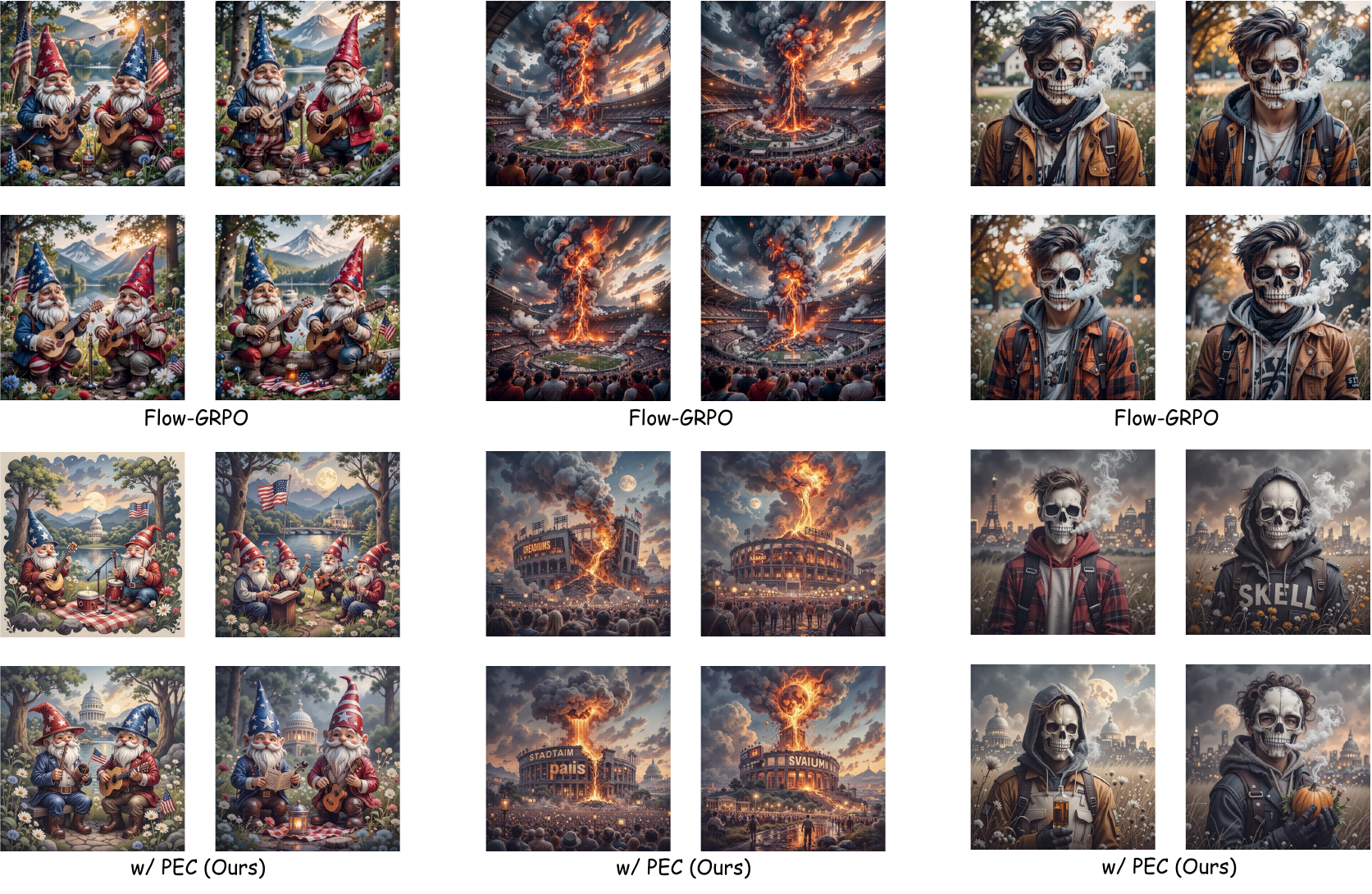}
    \vspace{-0.4em}
    \caption{\textbf{Extended Diversity Visualization.} PEC preserves visible variation across prompts and visual domains.}
    \label{fig:app_vis}
\end{figure}

This supplementary document presents additional experimental results, including extended visualizations in Sec.\ref{app:visualization}, user studies in Sec.\ref{app:user_study}, and reward variance analysis in Sec.\ref{app:reward_variance}. It further provides supporting derivations and heuristic discussions in Sec.\ref{app:theoretical}, followed by implementation details of covariance-aware entropy control in Sec.~\ref{app:cov-flow}.

\section{Experiments}
\label{app:experiments}

\subsection{Extended Visualization and User Study}
\label{app:visualization}
\label{app:user_study}

Our method demonstrates strong diversity preservation across diverse prompts and domains. Fig.~\ref{fig:app_vis} presents an extensive collection of generated images showcasing the diversity of our approach. PEC effectively balances quality and diversity: images maintain high aesthetic and semantic quality while exhibiting substantial visual variation within each prompt category, demonstrating that our perceptual entropy constraint successfully mitigates diversity collapse.

To further validate the effectiveness of our approach, we conduct a user study comparing our PEC method with Flow-GRPO. The study follows a blind, randomized, side-by-side protocol: participants are shown two anonymized image sets generated by the two methods in random order and are asked to choose the better set, or indicate no clear preference, under three criteria: overall preference, diversity, and quality.
We collect 100 annotations over 20 unique image-set pairs. Specifically, we recruited 20 volunteer participants from graduate students and researchers familiar with text-to-image generation. Each participant annotated five randomized image-set pairs, yielding five independent annotations per unique pair. As shown in Fig.~\ref{fig:user_study}, PEC is preferred over Flow-GRPO in 64\% of overall judgments and 82\% of diversity judgments, with 20\% and 10\% no-preference responses, respectively. For image quality, participants rate the two methods comparably, with 38\% preferring PEC, 43\% preferring Flow-GRPO, and 19\% indicating no clear preference. The results demonstrate that our perceptual entropy constraint improves subjective diversity and overall preference while maintaining comparable image quality.

\begin{figure}[t]
    \centering
    \begin{minipage}[b]{0.3\textwidth}
        \centering
        \includegraphics[width=\textwidth]{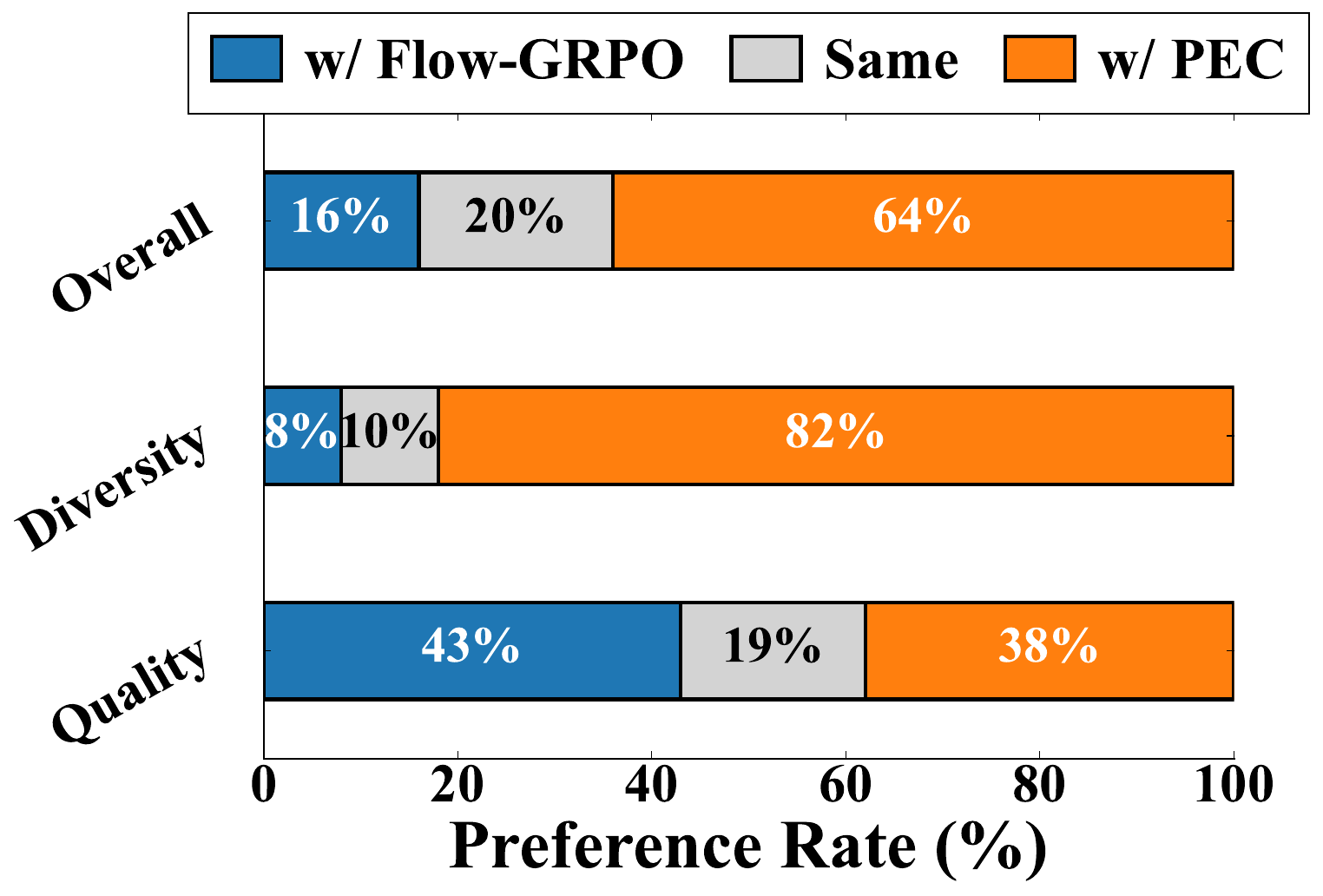}
        \vspace{-0.4em}
        \caption{\textbf{User Study Preferences.}}
        \label{fig:user_study}
    \end{minipage}
    \hfill
    \begin{minipage}[b]{0.33\textwidth}
        \centering
        \includegraphics[width=\textwidth]{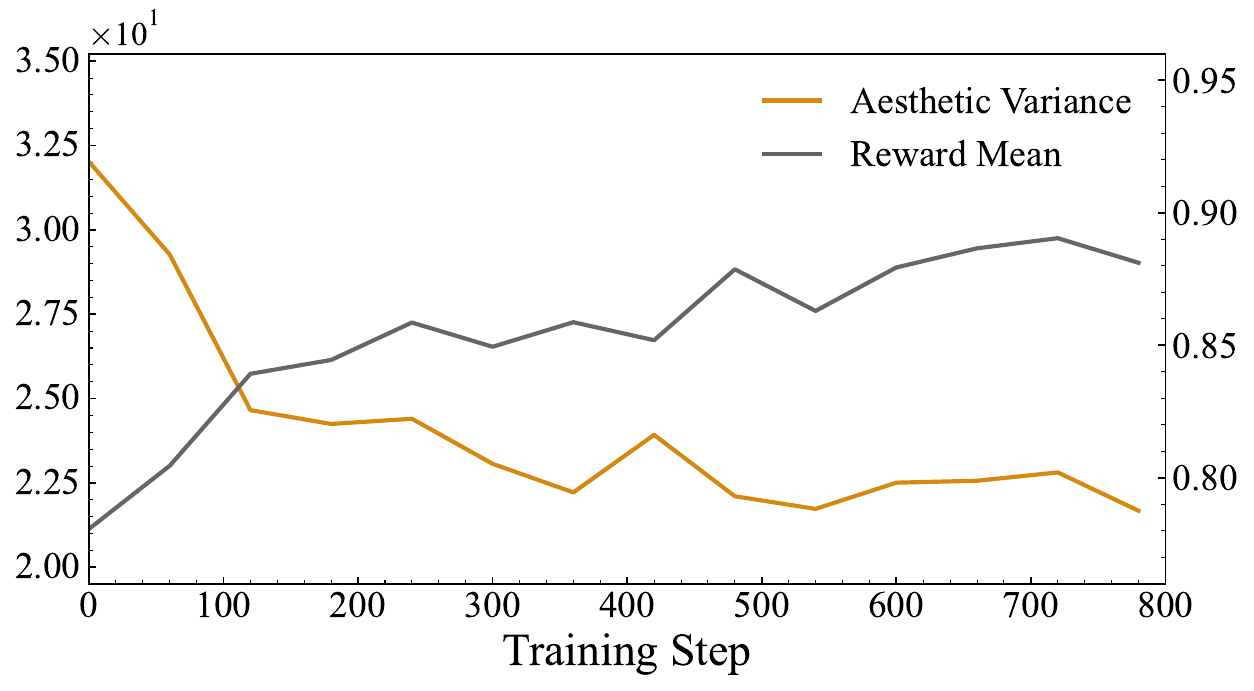}
        \vspace{-0.4em}
        \caption{\textbf{Aesthetic Space Variance.}}
        \label{fig:variance_aesthetic}
    \end{minipage}
    \hfill
    \begin{minipage}[b]{0.33\textwidth}
        \centering
        \includegraphics[width=\textwidth]{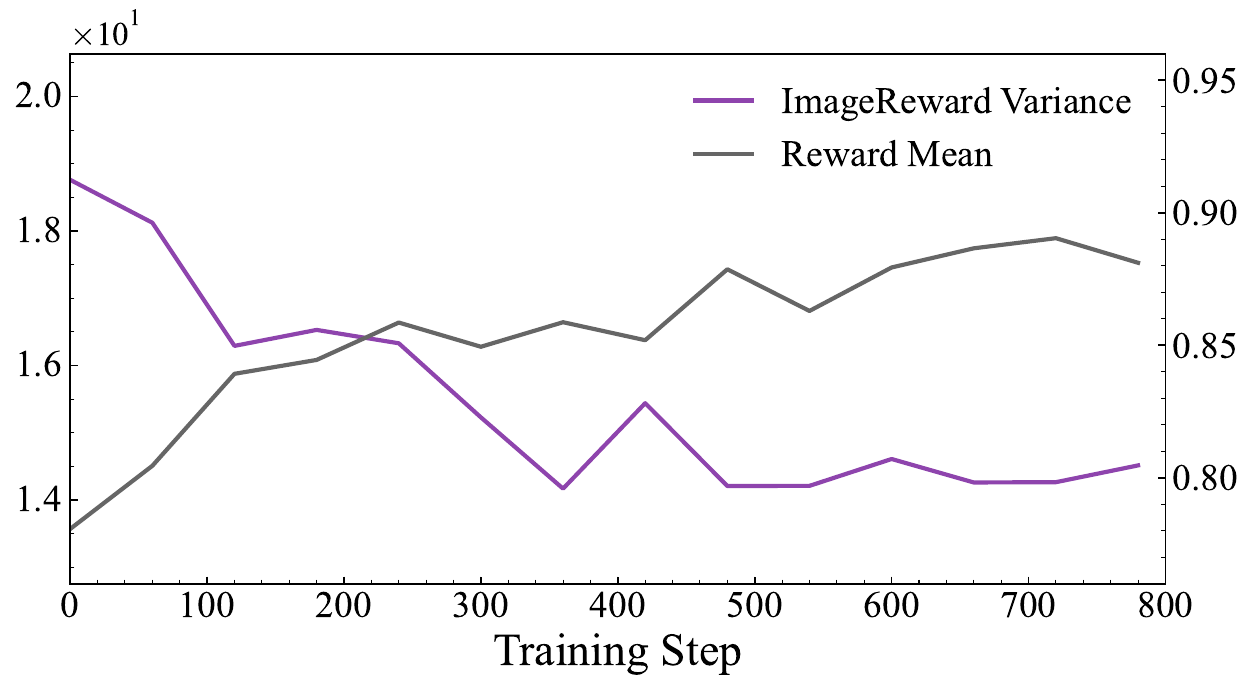}
        \vspace{-0.4em}
        \caption{\textbf{ImageReward Space Variance.}}
        \label{fig:variance_imagereward}
    \end{minipage}
\end{figure}

\subsection{Reward Variance Analysis on Other Perceptual Spaces}
\label{app:reward_variance}

To verify that our findings generalize beyond the default perceptual space, we analyze reward variance dynamics in alternative perceptual spaces defined by the Aesthetic and ImageReward encoders. As shown in Figs.~\ref{fig:variance_aesthetic} and~\ref{fig:variance_imagereward}, the same diversity-quality trade-off pattern emerges across different perceptual spaces: vanilla Flow-GRPO progressively reduces reward variance during training, reflecting the implicit drive toward mode collapse identified in our theoretical analysis. These results confirm that the entropy-variance relationship established in our framework is not an artifact of a particular feature space but a general property of flow-based RLHF. Runtime overhead and DINO entropy-space results have been moved to the main text (Sec.~\ref{sec:generalizability}, Tab.~\ref{tab:runtime_overhead}, and Tab.~\ref{tab:main_results}).

\subsection{Rule-Based Reward Results on SD3.5-M}
\label{app:sd35_rule_based}

To complement the FLUX.dev experiments with neural rewards, we further consider the rule-based reward setting used in Flow-GRPO~\citep{liu2025flow}. For completeness, Tab.~\ref{tab:app_flowgrpo_table1} reproduces the GenEval comparison table from Flow-GRPO and includes our PEC variant under the same SD3.5-M family setting. Tab.~\ref{tab:app_sd35_diversity} reports the corresponding diversity metrics. Taken together, these results show that perceptual entropy improves instruction-following performance while preserving diversity: PEC achieves higher GenEval performance than SD3.5-M+Flow-GRPO and obtains the best diversity average among the SD3.5-M variants.

\begin{table*}[t]
\centering
\caption{\textbf{GenEval Result.} Results for models other than SD3.5-M are from~\citep{Gpt-imgeval} or the corresponding original papers. Obj.: Object; Attr.: Attribute binding.}
\label{tab:app_flowgrpo_table1}
\resizebox{0.85\textwidth}{!}{%
\small
\begin{tabular}{l|c|cccccc}
\toprule
\textbf{Model} & \textbf{Overall} & \textbf{Single Obj.} & \textbf{Two Obj.} & \textbf{Counting} & \textbf{Colors} & \textbf{Position} & \textbf{Attr. Binding} \\
\midrule
\multicolumn{8}{c}{\textit{Diffusion Models}} \\
\midrule
LDM~\citep{rombach2022high} & 0.37 & 0.92 & 0.29 & 0.23 & 0.70 & 0.02 & 0.05 \\
SD1.5~\citep{rombach2022high} & 0.43 & 0.97 & 0.38 & 0.35 & 0.76 & 0.04 & 0.06 \\
SD2.1~\citep{rombach2022high} & 0.50 & 0.98 & 0.51 & 0.44 & 0.85 & 0.07 & 0.17 \\
SD-XL~\citep{podell2023sdxl} & 0.55 & 0.98 & 0.74 & 0.39 & 0.85 & 0.15 & 0.23 \\
DALLE-2~\citep{ramesh2022hierarchical} & 0.52 & 0.94 & 0.66 & 0.49 & 0.77 & 0.10 & 0.19 \\
DALLE-3~\citep{betker2023improving} & 0.67 & 0.96 & 0.87 & 0.47 & 0.83 & 0.43 & 0.45 \\
\midrule
\multicolumn{8}{c}{\textit{Autoregressive Models}} \\
\midrule
Show-o~\citep{xie2024show} & 0.53 & 0.95 & 0.52 & 0.49 & 0.82 & 0.11 & 0.28 \\
Emu3-Gen~\citep{wang2024emu3} & 0.54 & 0.98 & 0.71 & 0.34 & 0.81 & 0.17 & 0.21 \\
JanusFlow~\citep{ma2024janusflow} & 0.63 & 0.97 & 0.59 & 0.45 & 0.83 & 0.53 & 0.42 \\
Janus-Pro-7B~\citep{chen2025janus} & 0.80 & 0.99 & 0.89 & 0.59 & 0.90 & 0.79 & 0.66 \\
GPT-4o~\citep{gpt4o} & 0.84 & 0.99 & 0.92 & 0.85 & 0.92 & 0.75 & 0.61 \\
\midrule
\multicolumn{8}{c}{\textit{Flow Matching Models}} \\
\midrule
FLUX.1 Dev~\citep{flux2024} & 0.66 & 0.98 & 0.81 & 0.74 & 0.79 & 0.22 & 0.45 \\
SD3.5-L~\citep{sd3} & 0.71 & 0.98 & 0.89 & 0.73 & 0.83 & 0.34 & 0.47 \\
SANA-1.5 4.8B~\citep{xie2025sana} & 0.81 & 0.99 & 0.93 & 0.86 & 0.84 & 0.59 & 0.65 \\
SD3.5-M~\citep{sd3} & 0.63 & 0.98 & 0.78 & 0.50 & 0.81 & 0.24 & 0.52 \\
\midrule
SD3.5-M+Flow-GRPO & 0.95 & \textbf{1.00} & \textbf{0.99} & 0.95 & 0.92 & 0.99 & 0.86 \\
\rowcolor{gray!20} \textbf{+PEC (Ours)} & \textbf{0.97} & \textbf{1.00} & \textbf{0.99} & \textbf{0.97} & \textbf{0.94} & \textbf{1.00} & \textbf{0.91} \\
\bottomrule
\end{tabular}%
}
\end{table*}

\begin{table*}[t]
\centering
\caption{\textbf{Diversity Comparison on SD3.5-M.} Here, perceptual entropy is computed with a frozen CLIP encoder.}
\label{tab:app_sd35_diversity}
\resizebox{0.75\textwidth}{!}{%
\small
\setlength{\tabcolsep}{5pt}
\begin{tabular}{lccccccc}
\toprule
Method & DINO & CLIP & $\mathrm{V.S.}_{\mathrm{CLIP}}$ & $\mathrm{V.S.}_{\mathrm{DINO}}$ & $\mathrm{V.S.}_{\mathrm{IR}}$ & $\mathrm{V.S.}_{\mathrm{PS}}$ & Norm.Avg \\
\midrule
SD3.5-M & 115.56 & 1.17 & \underline{12.40} & \underline{28.10} & 6.76 & 1.41 & 0.801 \\
SD3.5-M+Flow-GRPO & 87.22 & 0.89 & 7.30 & 15.75 & 2.46 & 0.78 & 0.000 \\
\rowcolor{gray!20} PCVAE & \underline{118.63} & \underline{1.26} & 11.68 & 27.02 & \underline{6.85} & \underline{1.48} & \underline{0.822} \\
\rowcolor{gray!20} PEC & \textbf{126.03} & \textbf{1.41} & \textbf{12.44} & \textbf{28.58} & \textbf{7.82} & \textbf{1.58} & \textbf{1.000} \\
\bottomrule
\end{tabular}%
}
\end{table*}

\section{Supporting Derivations and Heuristic Discussions}
\label{app:theoretical}

\subsection{Proof of Property~\ref{prop:constant-entropy}}
\label{app:proof1}

In this section, we establish Property~\ref{prop:constant-entropy}: under the reverse transition in Sec.~\ref{sec:preliminary}, per-step conditional entropy is independent of $\theta$. We restate the claim for convenience.

\begin{property*}[Constant Per-step Entropy in Flow-based RL]
For the reverse-time transition defined in Eq.~\eqref{eq:x_tdt_rewrite}, the per-step conditional entropy at timestep $t$ satisfies:
\begin{equation*}
\mathcal{H}_t(p_\theta) = \frac{d}{2} - C_t.
\end{equation*}
where $d$ is the dimension of $\mathbf{x}_t$, and $C_t$ is a constant depending on timestep $t$.
\end{property*}

\begin{proof}
By substituting Eqs.~\eqref{eq:x_tdt_rewrite} and the log-probability definition into Eq.~\eqref{eq:entropy-in-flow-matching-rl}, the following key property can be derived.

We analyze the entropy term $\mathcal{H}(p_\theta) = -\mathbb{E}_{p_\theta} [\log p_\theta]$ in the gradient computation. To simplify the analysis, we directly consider sampling from $p_\theta$.

For a continuous Gaussian transition $p_\theta(\mathbf{x}_{t-1}|\mathbf{x}_t, c)$ at fixed timestep $t$, the per-step conditional entropy is
\begin{equation}
\mathcal{H}_t(p_\theta) = -\mathbb{E}_{\mathbf{x}_{t-1}, \mathbf{x}_t} \left[ \log p_\theta(\mathbf{x}_{t-1}|\mathbf{x}_t, c) \right].
\end{equation}

Since $p_\theta(\mathbf{x}_{t-1}|\mathbf{x}_t, c) = \mathcal{N}(\mu_\theta(\mathbf{x}_t, t), \sigma_t^2 \mathrm{d}t \, \mathbf{I})$, we have $\mathbf{x}_{t-1} = \mu_\theta(\mathbf{x}_t, t) + \sigma_t \sqrt{\mathrm{d}t} \cdot \boldsymbol{\epsilon}$ where $\boldsymbol{\epsilon} \sim \mathcal{N}(\mathbf{0}, \mathbf{I})$.

The log-probability is:
\begin{equation}
\log p_\theta(\mathbf{x}_{t-1}|\mathbf{x}_t, c) = -\frac{\|\mathbf{x}_{t-1} - \mu_\theta(\mathbf{x}_t, t)\|^2}{2\sigma_t^2 \mathrm{d}t} + C_t
\end{equation}

Substituting the sampling formula $\mathbf{x}_{t-1} = \mu_\theta(\mathbf{x}_t, t) + \sigma_t \sqrt{\mathrm{d}t} \cdot \boldsymbol{\epsilon}$ gives
\begin{equation}
\log p_\theta(\mathbf{x}_{t-1}|\mathbf{x}_t, c) = -\frac{\|\sigma_t \sqrt{\mathrm{d}t} \cdot \boldsymbol{\epsilon}\|^2}{2\sigma_t^2 \mathrm{d}t} + C_t = -\frac{\|\boldsymbol{\epsilon}\|^2}{2} + C_t.
\end{equation}

Taking the expectation over $\boldsymbol{\epsilon} \sim \mathcal{N}(\mathbf{0}, \mathbf{I})$ yields
\begin{equation}
\mathcal{H}_t(p_\theta) = -\mathbb{E}_{\boldsymbol{\epsilon}} \left[ -\frac{\|\boldsymbol{\epsilon}\|^2}{2} + C_t \right] = \mathbb{E}_{\boldsymbol{\epsilon}} \left[ \frac{\|\boldsymbol{\epsilon}\|^2}{2} \right] - C_t = \frac{d}{2} - C_t.
\end{equation}

where we used $\mathbb{E}[\|\boldsymbol{\epsilon}\|^2] = d$. Here $\|\boldsymbol{\epsilon}\|^2$ follows a chi-squared distribution $\chi^2(d)$ with $d$ degrees of freedom, so its mean is $d$.

Therefore, we conclude:
\begin{equation}
\boxed{\mathcal{H}_t(p_\theta) = \frac{d}{2} - C_t}
\end{equation}

This completes the proof.
\end{proof}

\subsection{Proof of Latent-Space Conditional Variance Interpretation}
\label{app:proof2}

In this section, we connect the per-step entropy $\mathcal{H}(p_\theta)$ in flow matching to the expected conditional variance of sampled latent variables $\mathbf{x}_{t-1}$ given $\mathbf{x}_t$. Consistent with prior work~\citep{liu2025flow,xue2025dancegrpo}, $\mathbf{x}$ refers to samples in the flow model's latent space rather than images in pixel space. We use $\mathrm{Var}(\mathbf{x}_{t-1}\mid \mathbf{x}_t)$ as a shorthand for the scalar $\mathrm{tr}(\mathrm{Cov}(\mathbf{x}_{t-1}\mid \mathbf{x}_t))$, i.e., the (sum of marginal) conditional variance of $\mathbf{x}_{t-1}$ given $\mathbf{x}_t$. We restate the claim formally as a Lemma.

\begin{lemma*}[Conditional Variance Interpretation of Per-Step Entropy]
For the reverse-time Gaussian transition in Eq.~\eqref{eq:x_tdt_rewrite}, the per-step entropy in flow matching satisfies
\begin{equation*}
\mathcal{H}(p_\theta) =
\mathbb{E}_{t,\mathbf{x}_t}\!\left[
\frac{\mathrm{Var}(\mathbf{x}_{t-1}\mid \mathbf{x}_t)}
{2\sigma_t^2 \mathrm{d}t}\right] - C,
\end{equation*}
where $C$ is a constant independent of $\theta$.
\end{lemma*}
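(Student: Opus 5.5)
The plan is to reduce the statement to the closed form already obtained in Property~\ref{prop:constant-entropy}, and then reinterpret the constant $d/2$ as a normalized conditional variance. First, fix $t$ and $\mathbf{x}_t$. Under the Gaussian transition in Eq.~\eqref{eq:x_tdt_rewrite}, the conditional law of $\mathbf{x}_{t-1}$ given $\mathbf{x}_t$ is $\mathcal{N}(\mu_\theta(\mathbf{x}_t,t,c),\sigma_t^2\mathrm{d}t\,\mathbf{I})$. Its conditional mean is $\mathbb{E}[\mathbf{x}_{t-1}\mid\mathbf{x}_t]=\mu_\theta(\mathbf{x}_t,t,c)$. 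Its conditional variance, read as the trace of the covariance as stated before the Lemma, is
\begin{equation*}
\mathrm{Var}(\mathbf{x}_{t-1}\mid\mathbf{x}_t)=\mathbb{E}\bigl[\|\mathbf{x}_{t-1}-\mu_\theta(\mathbf{x}_t,t,c)\|^2\mid\mathbf{x}_t\bigr]=d\,\sigma_t^2\mathrm{d}t .
\end{equation*}

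Next, I will start from the log-density in Eq.~\eqref{eq:xxxxxlog_prob} and take the conditional expectation of $-\log p_\theta$ over $\mathbf{x}_{t-1}$ given $\mathbf{x}_t$. The quadratic term becomes exactly $\mathbb{E}[\|\mathbf{x}_{t-1}-\mu_\theta\|^2\mid\mathbf{x}_t]/(2\sigma_t^2\mathrm{d}t)$. The key observation is that the centering point in the log-likelihood coincides with the true conditional mean, so this numerator is precisely $\mathrm{Var}(\mathbf{x}_{t-1}\mid\mathbf{x}_t)$ rather than a mean-squared error containing an extra bias term. By the tower property this gives, per step,
\begin{equation*}
\mathcal{H}_t(p_\theta)=\mathbb{E}_{\mathbf{x}_t}\Bigl[\tfrac{\mathrm{Var}(\mathbf{x}_{t-1}\mid\mathbf{x}_t)}{2\sigma_t^2\mathrm{d}t}\Bigr]-C_t .
\end{equation*}
This is consistent with Property~\ref{prop:constant-entropy}, since the bracket equals $d/2$.

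Finally, I will average over $t$ to obtain the aggregate $\mathcal{H}(p_\theta)$, using either a uniform average or the sum, matching the joint-entropy decomposition in Sec.~\ref{subsec:intractable-entropy}. I set $C:=\mathbb{E}_t[C_t]$. This constant depends only on $d$ and the noise schedule through $C_t=-\tfrac d2\log(2\pi\sigma_t^2\mathrm{d}t)$, so it is independent of $\theta$.

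The main obstacle is bookkeeping rather than analysis. One issue is the interchange of the trajectory expectation in Eq.~\eqref{eq:entropy-in-flow-matching-rl} with conditioning on $\mathbf{x}_t$, which is justified by the Markov factorization. The other is making sure the $t$-dependent normalizer $2\sigma_t^2\mathrm{d}t$ stays inside the expectation over $t$, while only the additive $C_t$ is pulled out into $C$. I would also note explicitly that the identity holds trivially because both sides are $\theta$-independent. Its content is interpretive: it is the exact template that Remark~\ref{prop:perceptual-entropy-variance} approximates once $\phi$ breaks the equality between the centering point and the conditional mean.
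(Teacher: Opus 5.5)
Your proposal is correct and follows essentially the same route as the paper's proof. You write $-\mathbb{E}[\log p_\theta]$ as the expected quadratic form minus the normalizing constant, note that the centering point $\mu_\theta(\mathbf{x}_t,t,c)$ is exactly the on-policy conditional mean, and use the tower property to identify the inner expectation with $\mathrm{tr}(\mathrm{Cov}(\mathbf{x}_{t-1}\mid\mathbf{x}_t))$. Your choice $C:=\mathbb{E}_t[C_t]$ handles the timestep-dependent constant more carefully than the paper's single $C$; your side remark that the identity holds ``because both sides are $\theta$-independent'' does not by itself justify equality, since what makes it trivial is the shared closed form $d/2-C_t$ per step.
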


\begin{proof}
We begin by analyzing the entropy of the conditional distribution $p_\theta(\mathbf{x}_{t-1}\mid\mathbf{x}_t, c)$. The derivation follows the same approach as Property~\ref{prop:constant-entropy}. Recall that
\begin{equation}
p_\theta(\mathbf{x}_{t-1}\mid\mathbf{x}_t, c) = \mathcal{N}(\mathbf{x}_{t-1};\, \mu_\theta(\mathbf{x}_t, t),\; \sigma_t^2 \mathrm{d}t \cdot \mathbf{I}),
\end{equation}
with the corresponding log-probability
\begin{equation}
\log p_\theta(\mathbf{x}_{t-1}\mid\mathbf{x}_t, c) = -\frac{\|\mathbf{x}_{t-1} - \mu_\theta(\mathbf{x}_t, t)\|^2}{2\sigma_t^2 \mathrm{d}t} + C.
\end{equation}

The differential entropy is then
\begin{equation}
\mathcal{H}(p_\theta)
= -\mathbb{E}_{\mathbf{x}_{t-1}, \mathbf{x}_t, t}\!\left[\log p_\theta(\mathbf{x}_{t-1}\mid\mathbf{x}_t, c)\right]
= \mathbb{E}_{\mathbf{x}_{t-1}, \mathbf{x}_t, t}\!\left[\frac{\|\mathbf{x}_{t-1} - \mu_\theta(\mathbf{x}_t, t)\|^2}{2\sigma_t^2 \mathrm{d}t}\right] - C.
\end{equation}

Recall from Eq.~\eqref{eq:x_tdt_rewrite} that $\mathbf{x}_{t-1} = \mu_\theta(\mathbf{x}_t, t) + \sigma_t \sqrt{\mathrm{d}t}\, \boldsymbol{\epsilon}$ with $\boldsymbol{\epsilon} \sim \mathcal{N}(\mathbf{0}, \mathbf{I})$, so the conditional mean is
\begin{equation}
\bar{\mathbf{x}}_{t-1} := \mathbb{E}[\mathbf{x}_{t-1}\mid \mathbf{x}_t] = \mu_\theta(\mathbf{x}_t, t).
\end{equation}
Plugging this in and pulling the inner expectation through, we obtain
\begin{equation}
\mathcal{H}(p_\theta)
= \mathbb{E}_{t,\mathbf{x}_t}\!\left[\frac{1}{2\sigma_t^2 \mathrm{d}t}\,\mathbb{E}_{\mathbf{x}_{t-1}\mid \mathbf{x}_t}\!\left[\|\mathbf{x}_{t-1} - \bar{\mathbf{x}}_{t-1}\|^2\right]\right] - C.
\end{equation}

By definition, the expected squared deviation from the conditional mean is the trace of the conditional covariance, which equals the (sum of marginal) conditional variance:
\begin{equation}
\mathbb{E}_{\mathbf{x}_{t-1}\mid \mathbf{x}_t}\!\left[\|\mathbf{x}_{t-1} - \bar{\mathbf{x}}_{t-1}\|^2\right]
= \mathrm{tr}\!\left(\mathrm{Cov}(\mathbf{x}_{t-1}\mid \mathbf{x}_t)\right)
= \mathrm{Var}(\mathbf{x}_{t-1}\mid \mathbf{x}_t).
\end{equation}

Substituting back and absorbing the time-step expectation gives the claimed identity:
\begin{equation}
\boxed{\mathcal{H}(p_\theta) =
\mathbb{E}_{t,\mathbf{x}_t}\!\left[
\frac{\mathrm{Var}(\mathbf{x}_{t-1}\mid \mathbf{x}_t)}
{2\sigma_t^2 \mathrm{d}t}\right] - C.}
\end{equation}
This completes the proof.
\end{proof}

\subsection{Proof of Corollary~\ref{cor:unified-grpo-main}}
\label{app:proof-unified-grpo}

We first restate the corollary to be proven, then provide the detailed proof.

\begin{corollary*}[Mode-Seeking Optimization under On-Policy Policy Gradient]
For policy gradient methods~\citep{williams1992simple} in Eq.~\eqref{eq:pg}, with $p_{\theta_{\mathrm{old}}}$ and $\mathcal{A}$ fixed within an update, the gradient satisfies:
\begin{equation*}
\nabla_\theta \mathcal{J}(\theta) = \nabla_\theta \Big( -D_{\mathrm{KL}}\big(p_\theta(\cdot|c) \,\|\, p_{\mathcal{A}}(\cdot|c)\big) + D_{\mathrm{KL}}\big(p_\theta(\cdot|c) \,\|\, p_{\theta_{\mathrm{old}}}(\cdot|c)\big) \Big),
\end{equation*}
where all distributions are over trajectories $\bx_{0:T}$, and $p_{\mathcal{A}}(\bx_{0:T}|c)=Z_{\mathcal{A}}(c)^{-1}p_{\theta_{\mathrm{old}}}(\bx_{0:T}|c)\exp(\mathcal{A}(\bx_0,c))$ with $Z_{\mathcal{A}}(c)=\int p_{\theta_{\mathrm{old}}}(\bx_{0:T}|c)\exp(\mathcal{A}(\bx_0,c))\,\mathrm{d}\bx_{0:T}$.
\end{corollary*}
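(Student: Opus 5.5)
The plan is to expand both KL terms on the right-hand side, observe that the entropy contributions of $p_\theta$ cancel exactly, and show that what remains is $\mathbb{E}_{p_\theta}[\mathcal{A}]$ up to a $\theta$-independent constant. Its gradient is then the REINFORCE expression in Eq.~\eqref{eq:pg}.

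\textbf{Step 1: expand the KL terms.} Fix $c$ and write both divergences as expectations under $p_\theta(\bx_{0:T}\mid c)$:
\begin{equation*}
-D_{\mathrm{KL}}(p_\theta\|p_{\mathcal{A}})+D_{\mathrm{KL}}(p_\theta\|p_{\theta_{\mathrm{old}}})
=\mathbb{E}_{p_\theta}\big[\log p_{\mathcal{A}}(\bx_{0:T}|c)-\log p_{\theta_{\mathrm{old}}}(\bx_{0:T}|c)\big].
\end{equation*}
The $\mathbb{E}_{p_\theta}[\log p_\theta]$ terms cancel here. This cancellation is the point of pairing the two reverse KLs.

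\textbf{Step 2: substitute the target.} By the definition of $p_{\mathcal{A}}$, we have $\log p_{\mathcal{A}}-\log p_{\theta_{\mathrm{old}}}=\mathcal{A}(\bx_0,c)-\log Z_{\mathcal{A}}(c)$. The bracketed quantity therefore equals $\mathbb{E}_{p_\theta}[\mathcal{A}(\bx_0,c)]-\log Z_{\mathcal{A}}(c)$.

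Because $p_{\theta_{\mathrm{old}}}$ and $\mathcal{A}$ are held fixed within the update, $Z_{\mathcal{A}}(c)$ does not depend on $\theta$. Its gradient therefore vanishes.

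\textbf{Step 3: differentiate.} Apply the log-derivative trick with $\mathcal{A}$ frozen, so no gradient flows through $\mathcal{A}$:
\begin{equation*}
\nabla_\theta\mathbb{E}_{p_\theta}[\mathcal{A}]=\mathbb{E}_{p_\theta}\big[\nabla_\theta\log p_\theta(\bx_{0:T}|c)\,\mathcal{A}(\bx_0,c)\big].
\end{equation*}
Next use the trajectory factorization $p_\theta(\bx_{0:T}|c)=p(\bx_T)\prod_{t=1}^{T}p_\theta(\bx_{t-1}|\bx_t,c)$ from Sec.~\ref{subsec:intractable-entropy}. The prior $p(\bx_T)$ is $\theta$-free, so the score becomes $\sum_{t=1}^{T}\nabla_\theta\log p_\theta(\bx_{t-1}|\bx_t,c)$. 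This recovers Eq.~\eqref{eq:pg} exactly.

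Taking the outer expectation over $c\sim\mathcal{D}$ on both sides completes the identity.

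\textbf{Main obstacle.} The delicate part is interpretive rather than computational.

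\emph{Frozen advantage.} The identity holds only under the stated convention that $\mathcal{A}$, and hence $Z_{\mathcal{A}}$, is frozen. Otherwise $\log Z_{\mathcal{A}}$ would contribute an extra gradient.

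\emph{Sampling distribution.} The expectation in Eq.~\eqref{eq:pg} must be taken under the current policy. Equivalently, one evaluates at $\theta=\theta_{\mathrm{old}}$, where on-policy samples from $p_{\theta_{\mathrm{old}}}$ coincide with $p_\theta$. I will state this explicitly, since with off-policy samples it is exactly the ratio $\rho_{t,i}$ of the clipped surrogate that corrects the mismatch.

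\emph{Regularity.} Exchanging $\nabla_\theta$ with the integral needs a mild assumption. The Gaussian transitions in Eq.~\eqref{eq:x_tdt_rewrite} have smooth means $\mu_\theta$, and I will assume the advantage has bounded moments. Under these conditions dominated convergence justifies the interchange.
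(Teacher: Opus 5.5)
Your proposal is correct and is essentially the paper's argument run in reverse. The paper starts from the REINFORCE gradient, substitutes $\mathcal{A} = \log p_{\mathcal{A}} - \log p_{\theta_{\mathrm{old}}} + \log Z_{\mathcal{A}}$, removes the $\log Z_{\mathcal{A}}$ term with the score-function identity, and then folds the result back into the two reverse KLs; you instead expand the KLs first and drop $\log Z_{\mathcal{A}}$ as a $\theta$-independent constant before differentiating, and your caveats (frozen $Z_{\mathcal{A}}$, expectation taken under $p_\theta$) match the ones the paper relies on.
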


\begin{proof}
Starting from the trajectory-level REINFORCE policy gradient in Eq.~\eqref{eq:pg}:
\begin{equation}
\nabla_\theta \mathcal{J}(\theta) = \mathbb{E}_{\bx_{0:T}\sim p_\theta(\cdot|c)}\Big[\mathcal{A}(\bx_0,c)\sum_{t=1}^{T}\nabla_\theta\log p_\theta(\bx_{t-1}|\bx_t,c)\Big].
\end{equation}
Using the trajectory factorization
\begin{equation}
\log p_\theta(\bx_{0:T}|c)=\log p(\bx_T)+\sum_{t=1}^{T}\log p_\theta(\bx_{t-1}|\bx_t,c),
\end{equation}
and taking $\nabla_\theta$ on both sides, the noise prior $p(\bx_T)$ is independent of $\theta$ so $\nabla_\theta\log p(\bx_T)=0$. This gives
\begin{equation}
\nabla_\theta\log p_\theta(\bx_{0:T}|c)=\sum_{t=1}^{T}\nabla_\theta\log p_\theta(\bx_{t-1}|\bx_t,c).
\end{equation}
Therefore:
\begin{equation}
\label{eq:proof-pg-trajectory}
\nabla_\theta\mathcal{J}(\theta) = \mathbb{E}_{\bx_{0:T}\sim p_\theta}\big[\mathcal{A}(\bx_0,c)\,\nabla_\theta\log p_\theta(\bx_{0:T}|c)\big].
\end{equation}

We rewrite $\mathcal{A}(\bx_0,c)$ as a log-density. Note that the naive Boltzmann form $p_{\mathcal{A}}\propto\exp(\mathcal{A}(\bx_0,c))$ on the trajectory space is not a proper density: since $\mathcal{A}$ depends only on the terminal $\bx_0$, $\int\exp(\mathcal{A}(\bx_0,c))\,\mathrm{d}\bx_{0:T}$ factors through an unbounded integral $\int 1\,\mathrm{d}\bx_{1:T}=\infty$ over the intermediate variables and diverges. Using $p_{\theta_{\mathrm{old}}}$ as the base measure (as in the corollary) supplies a normalized density on $\bx_{1:T}$ and makes $p_{\mathcal{A}}$ a valid probability distribution:
\begin{equation}
\label{eq:pa-explicit-proof}
p_{\mathcal{A}}(\bx_{0:T}|c)
= \frac{1}{Z_{\mathcal{A}}(c)}\,p_{\theta_{\mathrm{old}}}(\bx_{0:T}|c)\exp(\mathcal{A}(\bx_0,c)).
\end{equation}
Inverting this definition gives
\begin{equation}\label{eq:advantage-as-logratio}
\mathcal{A}(\bx_0,c) = \log p_{\mathcal{A}}(\bx_{0:T}|c) - \log p_{\theta_{\mathrm{old}}}(\bx_{0:T}|c) + \log Z_{\mathcal{A}}(c).
\end{equation}
Substituting Eq.~\eqref{eq:advantage-as-logratio} into Eq.~\eqref{eq:proof-pg-trajectory}:
\begin{align}
\nabla_\theta\mathcal{J}(\theta)
&= \mathbb{E}_{p_\theta}\!\big[\log p_{\mathcal{A}}\,\nabla_\theta\log p_\theta\big] - \mathbb{E}_{p_\theta}\!\big[\log p_{\theta_{\mathrm{old}}}\,\nabla_\theta\log p_\theta\big] \notag \\
&\qquad+ \log Z_{\mathcal{A}}(c)\cdot\mathbb{E}_{p_\theta}\!\big[\nabla_\theta\log p_\theta\big]. \\
\end{align}
The third term vanishes by the score-function identity $\mathbb{E}_{p_\theta}[\nabla_\theta\log p_\theta]=\nabla_\theta\,\mathbb{E}_{p_\theta}[1]=0$.

We now apply the log-derivative trick in reverse, which requires the integrand to be $\theta$-independent. Here $\log p_{\theta_{\mathrm{old}}}$ is fixed by construction; for $\log p_{\mathcal{A}}$, the partition function $Z_{\mathcal{A}}(c)=\mathbb{E}_{\bx_{0:T}\sim p_{\theta_{\mathrm{old}}}(\cdot|c)}[\exp(\mathcal{A}(\bx_0,c))]$ is an expectation under the frozen rollout policy and is therefore independent of $\theta$ (had it been taken under $p_\theta$, the trick would acquire an extra $-\nabla_\theta\log Z_{\mathcal{A}}(c)\neq 0$ correction). Both terms thus satisfy
\begin{equation}\label{eq:reverse-logderiv-rigorous}
\mathbb{E}_{p_\theta}\!\big[\log p_{\mathcal{A}}\,\nabla_\theta\log p_\theta\big] = \nabla_\theta\,\mathbb{E}_{p_\theta}[\log p_{\mathcal{A}}],
\qquad
\mathbb{E}_{p_\theta}\!\big[\log p_{\theta_{\mathrm{old}}}\,\nabla_\theta\log p_\theta\big] = \nabla_\theta\,\mathbb{E}_{p_\theta}[\log p_{\theta_{\mathrm{old}}}].
\end{equation}
Hence
\begin{equation}\label{eq:gradJ-as-logratio}
\nabla_\theta\mathcal{J}(\theta) = \nabla_\theta\,\mathbb{E}_{p_\theta}\!\big[\log p_{\mathcal{A}}-\log p_{\theta_{\mathrm{old}}}\big].
\end{equation}

Adding and subtracting $\log p_\theta$ inside each expectation yields the standard cross-entropy / KL identities:
\begin{equation}
\mathbb{E}_{p_\theta}[\log p_{\mathcal{A}}] = -D_{\mathrm{KL}}(p_\theta\|p_{\mathcal{A}}) - \mathcal{H}(p_\theta),
\qquad
\mathbb{E}_{p_\theta}[\log p_{\theta_{\mathrm{old}}}] = -D_{\mathrm{KL}}(p_\theta\|p_{\theta_{\mathrm{old}}}) - \mathcal{H}(p_\theta).
\end{equation}
Subtracting cancels the entropy and gives
\begin{equation}
\mathbb{E}_{p_\theta}\!\big[\log p_{\mathcal{A}}-\log p_{\theta_{\mathrm{old}}}\big] = -D_{\mathrm{KL}}(p_\theta\|p_{\mathcal{A}}) + D_{\mathrm{KL}}(p_\theta\|p_{\theta_{\mathrm{old}}}).
\end{equation}
Combining with Eq.~\eqref{eq:gradJ-as-logratio} yields the final gradient formula:
\begin{equation}
\boxed{\nabla_\theta\mathcal{J}(\theta) = \nabla_\theta\Big(-D_{\mathrm{KL}}\big(p_\theta(\cdot|c)\,\|\,p_{\mathcal{A}}(\cdot|c)\big) + D_{\mathrm{KL}}\big(p_\theta(\cdot|c)\,\|\,p_{\theta_{\mathrm{old}}}(\cdot|c)\big)\Big).}
\end{equation}
This completes the proof.
\end{proof}

\subsection{Heuristic Derivation for Remark~\ref{prop:perceptual-entropy-variance}}
\label{app:proof-perceptual-entropy-variance}

We do not view this subsection as a formal proof. Instead, it provides a first-order derivation that motivates Remark~\ref{prop:perceptual-entropy-variance} in the main text, showing why perceptual entropy can behave like a conditional feature-space variance score under a local approximation. {This is intended as a local conditional interpretation, rather than an exact characterization of total cross-sample variance.} As in the main text, we use $\mathrm{Var}(\mathbf{z}\mid \mathbf{x})$ as a shorthand for $\mathrm{tr}(\mathrm{Cov}(\mathbf{z}\mid \mathbf{x}))$, the (sum of marginal) conditional variance of $\mathbf{z}$ given $\mathbf{x}$. We restate the claim for convenience.

\begin{remark*}[Heuristic Connection to Conditional Feature-Space Variance]
Let $\mathbf{z}_{t-1}=\phi(\mathbf{x}_{t-1})$ and $\mathbf{m}=\mu_{\theta_{\mathrm{old}}}(\mathbf{x}_t)$ as in Definition~\ref{def:perceptual-entropy}, with all expectations conditioned on $\mathbf{x}_t$ taken under the rollout transition $p_{\theta_{\mathrm{old}}}$. Assume that, within a local reverse step, the perceptual encoder is approximately mean-preserving, i.e., $\mathbb{E}\!\left[\mathbf{z}_{t-1}\mid \mathbf{x}_t\right]\approx \phi(\mathbf{m})$. Then the perceptual entropy in Definition~\ref{def:perceptual-entropy} approximately tracks the expected conditional feature-space variance:
\begin{equation*}
\mathcal{H}_{\text{perc}}(p_\theta) \approx
\mathbb{E}_{t,\mathbf{x}_t}\!\left[
\frac{\mathrm{Var}(\mathbf{z}_{t-1}\mid \mathbf{x}_t)}
{2\sigma_t^2 \mathrm{d}t}\right] - C_t,
\end{equation*}
\noindent where $C_t = -\tfrac{d}{2}\log(2\pi\sigma_t^2\,\mathrm{d}t)$ is independent of $\theta$.
\end{remark*}

\begin{proof}[Heuristic derivation]
\textbf{Local linearization of $\phi$.} Under the rollout transition, $\mathbf{x}_{t-1}\mid\mathbf{x}_t \sim \mathcal{N}(\mathbf{m},\,\sigma_t^2\mathrm{d}t\,\mathbf{I})$ with $\mathbf{m}=\mu_{\theta_{\mathrm{old}}}(\mathbf{x}_t)$, so $\mathbf{m}$ is exactly the conditional mean of $\mathbf{x}_{t-1}$ given $\mathbf{x}_t$. For a frozen encoder $\phi$, the second-order Taylor expansion around $\mathbf{m}$ gives
\begin{equation}
\phi(\mathbf{x}_{t-1})
= \phi(\mathbf{m}) + J_{\phi}(\mathbf{m})(\mathbf{x}_{t-1}-\mathbf{m})
+ \tfrac{1}{2}(\mathbf{x}_{t-1}-\mathbf{m})^\top H_\phi(\mathbf{m})(\mathbf{x}_{t-1}-\mathbf{m})
+ \mathcal{O}(\|\mathbf{x}_{t-1}-\mathbf{m}\|^3).
\end{equation}
Since $\mathbb{E}[\mathbf{x}_{t-1}-\mathbf{m}\mid\mathbf{x}_t]=\mathbf{0}$ holds exactly under $p_{\theta_{\mathrm{old}}}$, the first-order term vanishes upon taking expectations. The leading residual is a second-order Jensen gap:
\begin{equation}
\mathbb{E}_{\mathbf{x}_{t-1}\mid\mathbf{x}_t}[\phi(\mathbf{x}_{t-1})]
= \phi(\mathbf{m}) + \tfrac{1}{2}\sigma_t^2\mathrm{d}t\,\operatorname{tr}(H_\phi(\mathbf{m})) + \mathcal{O}((\sigma_t^2\mathrm{d}t)^2).
\end{equation}
When the encoder is locally smooth, i.e., $\sigma_t^2\mathrm{d}t\,\|H_\phi\|$ is small, this yields the mean-preserving approximation $\mathbb{E}[\phi(\mathbf{x}_{t-1})\mid\mathbf{x}_t]\approx\phi(\mathbf{m})$.

\textbf{Perceptual entropy as a conditional residual term.} Starting from Definition~\ref{def:perceptual-entropy}, the perceptual entropy proxy is
\begin{equation}
\mathcal{H}_{\text{perc}}(p_\theta) =
\mathbb{E}_{t,k}\!\left[-\log p_\theta^{\mathrm{perc}}(\mathbf{x}_{t-1}^k\mid\mathbf{x}_t^k,c)\right],
\end{equation}
where $\mathbf{z}_{t-1} = \phi(\mathbf{x}_{t-1})$ denotes samples in the perceptual space. We treat $p_\theta^{\mathrm{perc}}$ as a feature-space scoring distribution, not as an exact density obtained by a change of variables, and therefore omit the Jacobian of $\phi$. This follows the same convention as score distillation and related variants, in which a frozen diffusion or vision model supplies a surrogate optimization signal without modeling the full Jacobian of the external mapping~\citep{poole2022dreamfusion,wang2023prolificdreamer,lin2023magic3d,wang2024animatabledreamer}.

Under the Gaussian score in Definition~\ref{def:perceptual-entropy}, $p_\theta^{\mathrm{perc}}$ has mean $\phi(\mathbf{m})$ and isotropic covariance $\sigma_t^2\,\mathrm{d}t\cdot\mathbf{I}$ (which omits the Jacobian factor $J_\phi J_\phi^\top$ present in the exact pushforward covariance, consistent with the surrogate-scoring convention stated above), so its log-density is
\begin{equation}
\log p_\theta^{\mathrm{perc}}
= -\frac{\|\mathbf{z}_{t-1} - \phi(\mathbf{m})\|^2}{2\sigma_t^2 \mathrm{d}t} + C,
\qquad C = -\tfrac{d}{2}\log(2\pi\sigma_t^2 \mathrm{d}t).
\end{equation}
Taking the negative expectation, the perceptual entropy proxy at fixed $t$ becomes
\begin{equation}
\mathcal{H}_{\text{perc}}(p_\theta\mid t) =
\frac{1}{2\sigma_t^2 \mathrm{d}t}\,
\mathbb{E}_{\mathbf{x}_t}\,
\mathbb{E}_{\mathbf{z}_{t-1}\mid \mathbf{x}_t}\!
\left[\|\mathbf{z}_{t-1} - \phi(\mathbf{m})\|^2\right] - C.
\end{equation}

\textbf{Approximation to conditional variance.} Combining the mean-preserving result $\mathbb{E}[\mathbf{z}_{t-1}\mid\mathbf{x}_t]= \phi(\mathbf{m}) + \mathcal{O}(\sigma_t^2\mathrm{d}t)$ with the bias-variance decomposition
\begin{equation*}
\mathbb{E}\!\left[\|\mathbf{z}_{t-1}-\phi(\mathbf{m})\|^2\mid\mathbf{x}_t\right]
= \operatorname{tr}\!\left(\mathrm{Cov}(\mathbf{z}_{t-1}\mid\mathbf{x}_t)\right)
+ \|\mathbb{E}[\mathbf{z}_{t-1}\mid\mathbf{x}_t]-\phi(\mathbf{m})\|^2,
\end{equation*}
the bias term is $\mathcal{O}((\sigma_t^2\mathrm{d}t)^2)$ and is negligible when the encoder is locally smooth, yielding
\begin{equation}
\mathbb{E}_{\mathbf{z}_{t-1}\mid\mathbf{x}_t}\!\left[\|\mathbf{z}_{t-1}-\phi(\mathbf{m})\|^2\right]
\approx \operatorname{tr}\!\left(\mathrm{Cov}(\mathbf{z}_{t-1}\mid\mathbf{x}_t)\right)
= \mathrm{Var}(\mathbf{z}_{t-1}\mid\mathbf{x}_t).
\end{equation}
Substituting back and taking the expectation over timesteps yields
\begin{equation}
\boxed{\mathcal{H}_{\text{perc}}(p_\theta) \approx
\mathbb{E}_{t,\mathbf{x}_t}\!\left[
\frac{\mathrm{Var}(\mathbf{z}_{t-1}\mid \mathbf{x}_t)}
{2\sigma_t^2 \mathrm{d}t}\right] - C_t,}
\end{equation}
which shows that perceptual entropy tracks the conditional feature-space variance of the reverse transition up to a timestep-dependent affine transformation.

For a rollout group $\{\mathbf{z}_{t-1}^k\}_{k=1}^K = \{\phi(\mathbf{x}_{t-1}^k)\}_{k=1}^K$, the corresponding empirical proxy at timestep $t$ centers each residual at the rollout-policy transition mean $\phi(\mathbf{m}^k)$ with $\mathbf{m}^k=\mu_{\theta_{\mathrm{old}}}(\mathbf{x}_t^k,t,c)$:
\begin{equation}
\widehat{\mathrm{Var}}(t)
= \frac{1}{K}\sum_{k=1}^K \|\mathbf{z}_{t-1}^k - \phi(\mathbf{m}^k)\|^2,
\end{equation}
which gives a sample estimate of $\mathbb{E}_{\mathbf{x}_t}[\mathrm{Var}(\mathbf{z}_{t-1}\mid\mathbf{x}_t)]$ used in the reward shaping of Sec.~\ref{sec:perceptual-shaping}.
\end{proof}

\section{Covariance-Aware Entropy Control for Flow GRPO}\label{app:cov-flow}

Natural policy gradient is rarely used in post-training of generative models due to its second-order optimization cost. However, its target formulation with KL constraints shares the same spirit with TRPO and PPO. Motivated by {the log-probability--advantage covariance principle discussed in prior LLM RL analyses~\citep{cui2025entropy}}, we construct a flow analogue for GRPO using reverse-transition likelihoods and derive covariance-aware constraints.

\subsection{Covariance-Based Sample Identification}

For a batch of $N$ transitions $\{(\bx_{t-1}^{i}, \bx_t^{i}, c^i)\}_{i=1}^{N}$ sampled from the exploration stage, we measure how strongly the log-probability and advantage co-vary for each sample. Specifically, for each transition $i$, we compute the centered cross-product:
\begin{equation}\label{eq:cov}
    \mathrm{Cov}(i) = \left(\log p_\theta(\bx_{t-1}^{i}\mid \bx_t^{i},c^i) - \bar{\ell}\right) \cdot \left(\mathcal{A}_i - \bar{\mathcal{A}}\right),
\end{equation}
where $\bar{\ell} = \frac{1}{N}\sum_{j=1}^{N}\log p_\theta(\bx_{t-1}^{j}\mid \bx_t^{j},c^j)$ is the batch mean log-probability and $\bar{\mathcal{A}} = \frac{1}{N}\sum_{j=1}^{N}\mathcal{A}_j$ is the batch mean advantage. Here, $p_\theta(\bx_{t-1}\mid \bx_t,c^i) = \mathcal{N}(\mu_\theta(\bx_t,t,c^i), \sigma_t^2\,\mathrm{d}t\,\mathbf{I})$ is the conditional Gaussian policy with log-probability $\log p_\theta(\bx_{t-1}\mid \bx_t,c^i) = -\frac{\|\bx_{t-1}-\mu_\theta(\bx_t,t,c^i)\|^2}{2\sigma_t^2\,\mathrm{d}t}+C_t$, and $\mathcal{A}_i$ denotes the GRPO group-normalized advantage. The empirical covariance in Eq.~\eqref{eq:cov} instantiates the analogous covariance between transition log-likelihood and advantage in the flow setting.

A large positive $\mathrm{Cov}(i)$ indicates that transitions with high log-probability receive high advantages (and vice versa), signaling a strong positive correlation between policy likelihood and performance. Following {the same covariance interpretation}, such transitions are treated as entropy-reducing samples during gradient optimization. The key insight is that by identifying and selectively moderating updates on these high-covariance samples, we can control the rate of entropy collapse while maintaining reward optimization.

\subsection{Entropy-Aware Constraint via Gradient Clipping}

The Clip-Cov strategy leverages covariance scores to selectively suppress entropy-accelerating gradients. Given covariance bounds $\omega_{\text{low}}, \omega_{\text{high}} \in \mathbb{R}$ and a clipping ratio $r \in (0, 1)$, we randomly select transitions whose covariance falls within the specified range:
\begin{equation}
\mathcal{I}_{\text{clip}} \sim \text{Uniform}\Big(\{i \mid \mathrm{Cov}(i) \in [\omega_{\text{low}}, \omega_{\text{high}}]\}, \lfloor r \cdot N\rfloor\Big).
\end{equation}
The Clip-Cov objective then detaches these selected transitions from gradient flow:
\begin{equation}\label{eq:clip-cov}
\mathcal{J}_{\text{\texttt{Clip-Cov}}}(\theta) = \mathcal{J}_{\mathrm{GRPO}}(\theta)\,\mathbf{1}_{\{i \notin \mathcal{I}_{\text{clip}}\}}.
\end{equation}
By zeroing out gradients on transitions with covariance in $[\omega_{\text{low}}, \omega_{\text{high}}]$, we prevent the most aggressive entropy-reducing updates while preserving normal gradient flow on the majority of the batch. This achieves a pragmatic balance: entropy-accelerating samples are suppressed, but the policy still makes meaningful progress on reward optimization. The clipping ratio $r$ and covariance bounds $\omega_{\text{low}}, \omega_{\text{high}}$ serve as tunable parameters to control the strength of entropy regularization. In practice, $\omega_{\text{high}}$ is typically set to a very large value (effectively $+\infty$) since we aim to suppress the most aggressive entropy-reducing updates. For simplicity, we directly select the top 25\% of transitions with the largest $\mathrm{Cov}(i)$ as the clipping set $\mathcal{I}_{\text{clip}}$.

\paragraph{Integrating Perceptual Entropy into Clip-Cov.}
To incorporate perceptual entropy $\mathcal{H}_{\text{perc}}$ into the Clip-Cov framework, we replace the standard covariance computation with a perceptual covariance metric. Specifically, we compute the perceptual log-probability $\log p_{\text{perc}}(\bx_{t-1}^{i}\mid \bx_t^{i},c^i)$ for each generated sample, and define the perceptual covariance as:
\begin{equation}
\mathrm{Cov}_{\mathrm{perc}}(i) = \left(\log p_{\text{perc}}(\bx_{t-1}^{i}\mid \bx_t^{i},c^i) - \bar{\ell}_{\text{perc}}\right) \cdot \left(\mathcal{A}_i - \bar{\mathcal{A}}\right),
\end{equation}
where $\bar{\ell}_{\text{perc}} = \frac{1}{N}\sum_{j=1}^{N}\log p_{\text{perc}}(\bx_{t-1}^{j}\mid \bx_t^{j},c^j)$ is the mean perceptual log-probability and $\bar{\mathcal{A}} = \frac{1}{N}\sum_{j=1}^{N}\mathcal{A}_j$ is the batch mean advantage. Transitions with large $\mathrm{Cov}_{\mathrm{perc}}(i)$ indicate samples that achieve high advantage yet receive high perceptual likelihood, signaling potential diversity collapse in perceptual space. The modified Clip-Cov objective then selects samples based on $\mathrm{Cov}_{\mathrm{perc}}(i)$ instead of $\mathrm{Cov}(i)$, thereby directly targeting perceptual diversity preservation.

\subsection{Entropy-Aware Constraint via KL Regularization}

An alternative approach applies a reverse-KL penalty directly to high-covariance transitions. Given a threshold ratio $k \in (0, 1)$ and penalty strength $\beta > 0$, we identify the top-$k$ proportion of transitions ranked by covariance:
\begin{equation}
\mathcal{I}_{\text{KL}} = \Big\{i \mid \mathrm{Rank}(\mathrm{Cov}(i)) \leq k \cdot N \Big\},
\end{equation}
where $\mathrm{Rank}(\mathrm{Cov}(i))$ denotes the rank in descending covariance order. For these entropy-critical transitions, we impose a reverse-KL constraint:
\begin{equation}\label{eq:kl-cov}
\mathcal{J}_{\text{\texttt{KL-Cov}}}(\theta) = \mathcal{J}_{\mathrm{GRPO}}(\theta) - \beta\,D_{\mathrm{KL}}\big(p_{\theta_{\mathrm{old}}}(\bx_{t-1}^{i}\mid \bx_t^{i},c^i) \,\|\, p_\theta(\bx_{t-1}^{i}\mid \bx_t^{i},c^i)\big)\,\mathbf{1}_{\{i \in \mathcal{I}_{\text{KL}}\}}.
\end{equation}
For Gaussian policies with shared covariance $\sigma_t^2\,\mathrm{d}t$, this KL divergence reduces to:
\begin{equation}\label{eq:kl-gaussian}
D_{\mathrm{KL}}\big(p_{\theta_{\mathrm{old}}} \| p_\theta\big) = \frac{\|\mu_{\theta_{\mathrm{old}}}^{i} - \mu_\theta^{i}\|^2}{2\sigma_t^2\,\mathrm{d}t}.
\end{equation}
The KL-Cov approach penalizes large policy changes on entropy-critical samples, thereby constraining the magnitude of entropy-reducing updates. The penalty coefficient $\beta$ governs the trade-off between reward optimization and entropy preservation. In contrast to Clip-Cov, which discretely suppresses gradients, KL-Cov applies a continuous penalty that allows some flexibility while still dampening aggressive updates. Together, these two strategies provide complementary mechanisms to moderate entropy decrease by directly targeting samples identified as entropy accelerators through the flow covariance criterion.

\paragraph{Integrating Perceptual Entropy into KL-Cov.}
To integrate perceptual entropy into the KL-Cov framework, we replace the sample ranking criterion with perceptual covariance. Instead of ranking by $\mathrm{Cov}(i)$, we identify the top-$k$ proportion of transitions ranked by perceptual covariance:
\begin{equation}
\mathcal{I}_{\text{KL-perc}} = \Big\{i \mid \mathrm{Rank}(\mathrm{Cov}_{\mathrm{perc}}(i)) \leq k \cdot N \Big\},
\end{equation}
where $\mathrm{Rank}(\mathrm{Cov}_{\mathrm{perc}}(i))$ denotes the rank in descending perceptual covariance order. The modified objective applies the same KL penalty to these selected transitions:
\begin{equation}
\mathcal{J}_{\text{\texttt{KL-Cov-perc}}}(\theta) = \mathcal{J}_{\mathrm{GRPO}}(\theta) - \beta\,D_{\mathrm{KL}}\big(p_{\theta_{\mathrm{old}}}(\bx_{t-1}^{i}\mid \bx_t^{i},c^i) \,\|\, p_\theta(\bx_{t-1}^{i}\mid \bx_t^{i},c^i)\big)\,\mathbf{1}_{\{i \in \mathcal{I}_{\text{KL-perc}}\}}.
\end{equation}
By selecting samples based on perceptual covariance, the KL penalty is applied to transitions that most strongly contribute to perceptual entropy decrease, thereby directly targeting diversity preservation in perceptual space.

\section{Broader Impact}
\label{app:broader_impact}

\paragraph{Positive impacts.}
This work improves diversity preservation in RLHF-aligned text-to-image generation, enabling models to produce a broader range of high-quality outputs for a given prompt. This benefits creative applications such as design assistance, content creation, and personalized media generation, where coverage across diverse styles and contexts is essential.

\paragraph{Negative impacts.}
Improved image generation quality and diversity may lower the barrier to producing misleading or harmful visual content, including deepfakes and synthetic disinformation. This is a general concern shared by all advances in generative modeling. We note that our method is a fine-tuning technique applied on top of existing models (FLUX.dev) that already carry their own usage policies, and we do not release new model weights at this time.